\theoremstyle{plain}
\newtheorem{theorem}{Theorem}[section]
\newtheorem{proposition}[theorem]{Proposition}
\newtheorem{lemma}[theorem]{Lemma}
\theoremstyle{definition}
\theoremstyle{remark}
\title{Gaussian Mixture Solvers for Diffusion Models}
\renewcommand\footnotemark{}
\author{Hanzhong Guo$^{*1,3}$, Cheng Lu$^{4}$, Fan Bao$^{4}$, Tianyu Pang$^{2}$, Shuicheng Yan$^{2}$, \\\textbf{Chao Du}$^{\dagger2}$\textbf{,} \textbf{Chongxuan Li}$^{\dagger1,3}$\thanks{${}^{*}$Work done during an internship
at Sea AI Lab. ${}^\dagger$Correspondence to Chao Du and Chongxuan Li.}
\\
$^1$Gaoling School of Artificial Intelligence, Renmin University of China \\
$^2$Sea AI Lab, Singapore  \\
$^3$Beijing Key Laboratory of Big Data Management and Analysis Methods, Beijing, China \\
$^4$Tsinghua University \\
\texttt{\{allanguo, tianyupang, yansc, duchao\}@sea.com;}\\
\texttt{lucheng.lc15@gmail.com; bf19@mails.tsinghua.edu.cn;}\\
\texttt{chongxuanli@ruc.edu.cn}
}
\begin{document}

\maketitle

\begin{abstract}
Recently, diffusion models have achieved great success in generative tasks. Sampling from diffusion models is equivalent to solving the reverse diffusion stochastic differential equations (SDEs) or the corresponding probability flow ordinary differential equations (ODEs). In comparison, SDE-based solvers can generate samples of higher quality and are suited for image translation tasks like stroke-based synthesis. During inference, however, existing SDE-based solvers are severely constrained by the efficiency-effectiveness dilemma. Our investigation suggests that this is because the Gaussian assumption in the reverse transition kernel is frequently violated (even in the case of simple mixture data) given a limited number of discretization steps. To overcome this limitation, we introduce a novel class of SDE-based solvers called \emph{Gaussian Mixture Solvers (GMS)} for diffusion models. Our solver estimates the first three-order moments and optimizes the parameters of a Gaussian mixture transition kernel using generalized methods of moments in each step during sampling. Empirically, our solver outperforms numerous SDE-based solvers in terms of sample quality in image generation and stroke-based synthesis in various diffusion models, which validates the motivation and effectiveness of GMS. Our code is available at \href{https://github.com/Guohanzhong/GMS}{https://github.com/Guohanzhong/GMS}.
\end{abstract}

\section{Introduction}

In recent years, deep generative models and especially (score-based) diffusion models~\citep{sohl2015deep,ho2020denoising,song2020score,karras2022elucidating} have made remarkable progress in various domains, including image generation~\citep{ho2022cascaded, dhariwal2021diffusion}, audio generation~\citep{kong2020diffwave,popov2021grad}, video generation~\citep{ho2022video}, 3D object generation~\citep{poole2022dreamfusion}, multi-modal generation~\citep{xu2022versatile,bao2023one,hertz2022prompt,zhang2023adding}, and several downstream tasks such as image translation~\citep{nachmani2021zero, zhao2022egsde} and image restoration~\cite{kawar2022denoising}.

Sampling from diffusion models can be interpreted as solving the reverse-time diffusion stochastic differential equations (SDEs) or their corresponding probability flow ordinary differential equations (ODEs)~\citep{song2020score}.
SDE-based and ODE-based solvers of diffusion models have very different properties and application scenarios 
In particular, 
SDE-based solvers usually perform better when given a sufficient number of discretization steps~\citep{karras2022elucidating}. Indeed, a recent empirical study~\citep{sdedpmsolver} suggests that SDE-based solvers can potentially generate high-fidelity samples with realistic details and intricate semantic coherence from pre-trained large-scale text-to-image diffusion models~\citep{if}.
Besides, SDE-based solvers are preferable in many downstream tasks such as stroke-based synthesis~\citep{meng2021sdedit}, image translation~\citep{zhao2022egsde}, and image manipulation~\citep{kawar2022imagic}. 

Despite their wide applications, SDE-based solvers face a significant trade-off between efficiency and effectiveness during sampling, since an insufficient number of steps lead to larger discretization errors.
To this end, \citet{bao2022analytic} estimate the optimal variance of the Gaussian transition kernel in the reverse process instead of using a handcraft variance to reduce the discretization errors.
Additionally, \citet{bao2022estimating} explore the optimal diagonal variance when dealing with an imperfect noise network and demonstrate state-of-the-art (SOTA) performance with a few steps among other SDE-based solvers. 
Notably, these solvers all assume that the transition kernel in the reverse process is Gaussian.

In this paper, we systematically examine the assumption of the Gaussian transition kernel and reveal that it can be easily violated under a limited number of discretization steps even in the case of simple mixture data. To this end, we propose a new type of SDE-based solver called \emph{Gaussian Mixture Solver (GMS)}, which employs a more expressive Gaussian mixture transition kernel in the reverse process for better approximation given a limited number of steps (see visualization results on toy data in Fig.~\ref{fig:toydata1}). In particular, we first learn a noise prediction network with multiple heads that estimate the high-order moments of the true reverse transition kernel respectively. For sampling, we fit a Gaussian mixture 
transition kernel in each step via the \emph{generalized methods of the moments}~\citep{hansen1982large} using the predicted high-order moments of the true reverse process.

To evaluate GMS, we compare it against a variety of baselines~\citep{ho2020denoising,bao2022estimating,jolicoeur2021gotta,song2020denoising} in terms of several widely used metrics (particularly sample quality measured by FID) in the tasks of generation and stroke-based synthesis on multiple datasets. 
Our results show that GMS outperforms state-of-the-art SDE-based solvers~\citep{ho2020denoising,bao2022estimating,jolicoeur2021gotta} in terms of sample quality with the limited number of discretization steps (e.g., $<100$). For instance, GMS improves the FID by 4.44 over the SOTA SDE-based solver~\citep{bao2022estimating} given 10 steps on CIFAR10.
Furthermore, We evaluate GMS on a stroke-based synthesis task. The findings consistently reveal that GMS achieves higher levels of realism than all aforementioned SDE-based solvers as well as the widely adopted ODE-based solver DDIM~\citep{song2020denoising} while maintaining comparable computation budgets and faithfulness scores (measured by $L_2$ distance). Such empirical findings validate the motivation and effectiveness of GMS.

\begin{figure}[t]
\centering
\begin{center}
\subfloat[$\#$Step=5]{\includegraphics[height=0.28\linewidth]{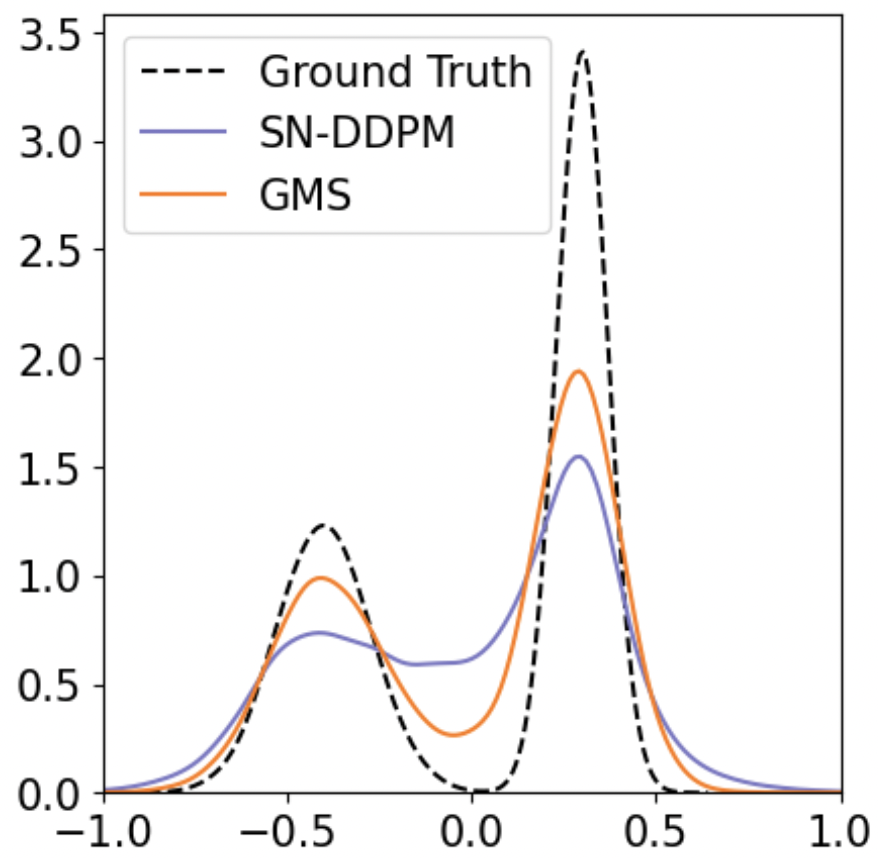}}\ 
\subfloat[$\#$Step=10]{\includegraphics[height=0.28\linewidth]{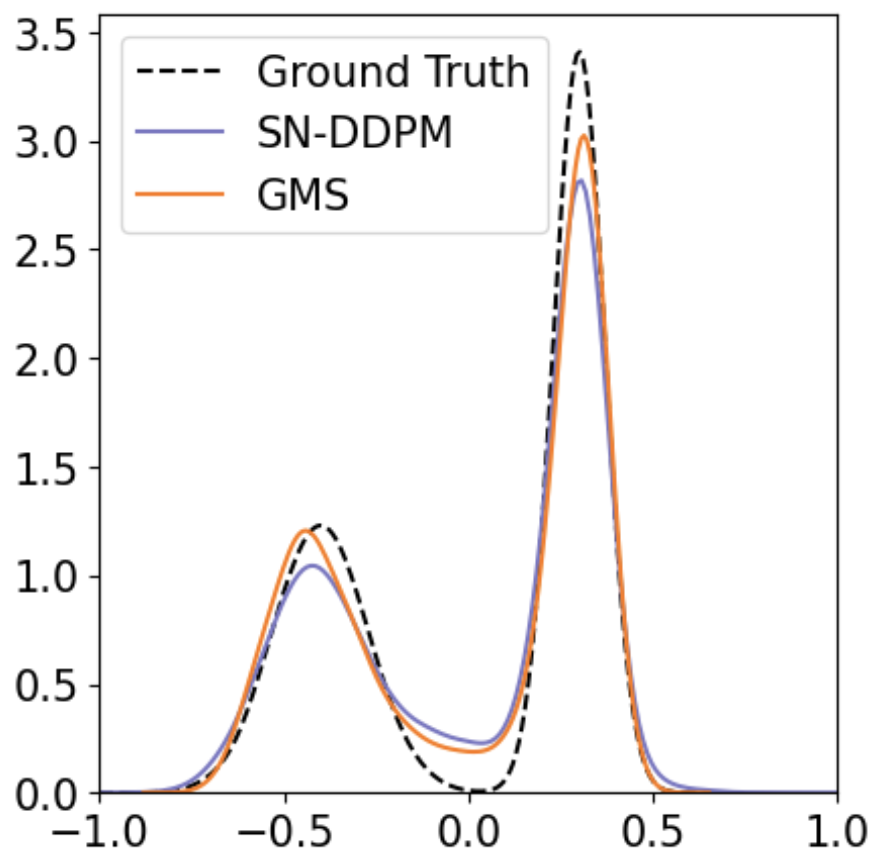}}\ 
\subfloat[$\#$Step=20]{\includegraphics[height=0.28\linewidth]{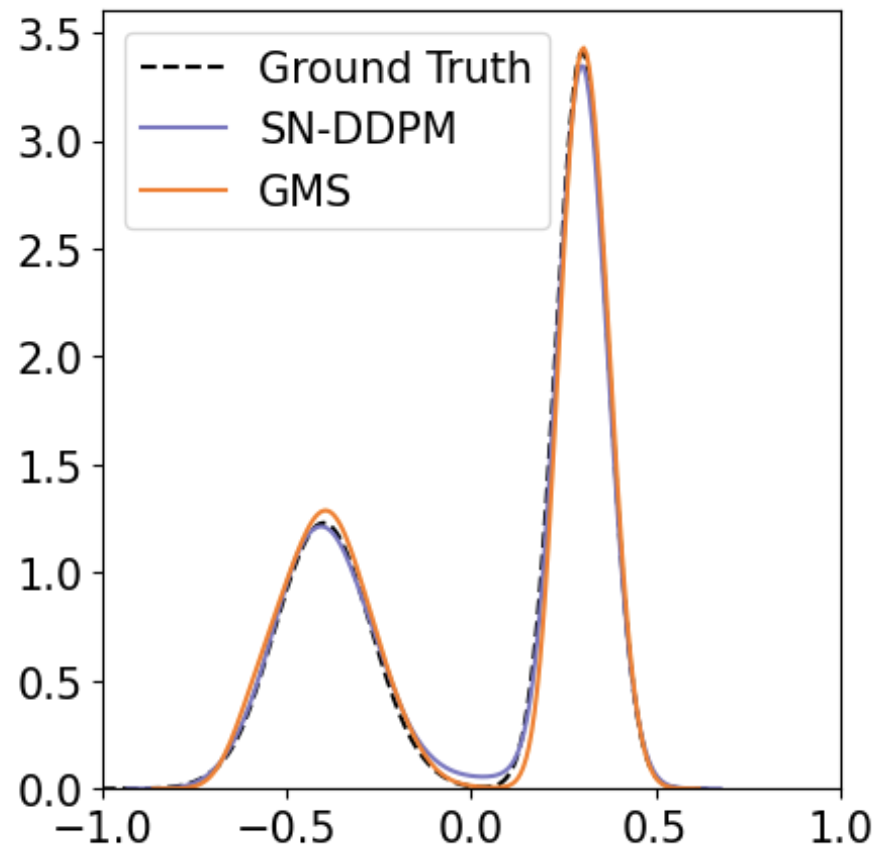}}
\vspace{-.1cm}
\caption{\textbf{Sampling on a mixture of Gaussian.} GMS (\textbf{ours}) and SN-DDPM excel in fitting the true distribution when the transition kernel is Gaussian, with a sufficient number of sampling steps (c). However, 
GMS outperforms SN-DDPM~\citep{bao2022estimating} when sampling steps are limited and the reverse transition kernel deviates from Gaussian (a-b). 
}
\label{fig:toydata1}
\end{center}
\vspace{-.2cm}
\end{figure}

\section{Background}
\label{sec:bg}

In this section, we provide a brief overview of the (score-based) diffusion models, representative SDE-based solvers for diffusion models, and applications of such solvers.

\subsection{Diffusion  models}

Diffusion models gradually perturb data with a forward diffusion process and then learn to reverse such process to recover the data distribution. Formally, let $x_0\in\mathbb{R}^n$ be a random variable with unknown data distribution $q(x_0)$. Diffusion models define the forward process $\left \{ x_t \right \} _{t\in [0,1]}$ indexed by time $t$, which perturbs the data by adding Gaussian noise to $x_0$ with
\begin{eqnarray}
\label{eq:dpmf}
\label{eq:forward}
     q(x_t|x_0) = \mathcal{N}(x_t|a(t)x_0,\sigma^2(t)I).
\end{eqnarray}
In general, the function $a(t)$ and $\sigma(t)$ are selected so that the logarithmic signal-to-noise ratio {\small $\log \frac{a^2(t)}{\sigma^2(t)}$} decreases monotonically with time $t$, causing the data to diffuse towards random Gaussian noise~\citep{kingma2021variational}. 
Furthermore, it has been demonstrated by \citet{kingma2021variational} that the following SDE shares an identical transition distribution $q_{t|0}(x_t|x_0)$ with Eq.~\eqref{eq:dpmf}:
\begin{eqnarray}
\label{eq:sdef}
     \dd x_t = f(t)x_t\dd t + g(t)\dd \omega  ,\quad x_0\sim q(x_0),
\end{eqnarray}
where $\omega \in \mathbb{R}^n$ is a standard Wiener process and 
\begin{eqnarray}
\label{eq:gab}
    f(t)=\frac{\dd \log a(t)}{\dd t} ,\quad g^2(t)=\frac{\dd \sigma^2(t)}{\dd t}-2\sigma^2(t)\frac{\dd \log a(t)}{\dd t} .
\end{eqnarray}
Let $q(x_t)$ be the marginal distribution of the above SDE at time $t$. Its reversal process can be described by a corresponding continuous SDE which recovers the data distribution~\cite{song2020score}:
\begin{eqnarray}
\label{eq:continuous_back}
    \dd x = \left [ f(t)x_t-g^2(t) \nabla_{x_t} \log q(x_t) \right ] \dd t + g(t)\dd \bar{\omega} ,\quad x_1\sim q(x_1),
\end{eqnarray}
where $\bar{\omega} \in \mathbb{R}^n$ is a reverse-time standard Wiener process. 
The only unknown term in Eq.~\eqref{eq:continuous_back} is the score function $\nabla_{x_t} \log q(x_t)$. To estimate it, existing works~\citep{ho2020denoising,song2020score,karras2022elucidating} train a noise network $\epsilon_{\theta}(x_t,t)$ to obtain a scaled score function $\sigma(t)\nabla_{x_t} \log q(x_t)$ using denoising score matching~\citep{vincent2011connection}: 
\begin{eqnarray}
\label{eq:loss}
    &\mathcal{L}(\theta) = \int_{0}^1w(t)\mathbb{E}_{q(x_0)}\mathbb{E}_{q(\epsilon )}[\left \| \epsilon _{\theta}(x_t,t)-\epsilon  \right \|_2^2 ]\dd t ,
\end{eqnarray}
where $w(t)$ is a weighting function, $q(\epsilon)$ is standard Gaussian distribution and $x_t \sim q(x_t|x_0)$ follows Eq.~\eqref{eq:forward}. The optimal solution of the optimization objective Eq.~\eqref{eq:loss} is $\epsilon_{\theta}(x_t,t)=-\sigma(t) \nabla_{x_t} \log q(x_t)$. 

Hence, samples can be obtained by initiating the process with a standard Gaussian variable $x_1$, then substituting $\nabla_{x_t} \log q(x_t)$ with $-\frac{\epsilon_{\theta}(x_t,t)}{\sigma(t)}$ and discretizing reverse SDE Eq.~\eqref{eq:continuous_back} from $t=1$ to $t=0$ to generate $x_0$. 

\subsection{SDE-based solvers for diffusion models}

The primary objective of SDE-based solvers lies in decreasing discretization error and therefore minimizing function evaluations required for convergence during the process of discretizing Eq.~\eqref{eq:continuous_back}.
Discretizing the reverse SDE in Eq.~\eqref{eq:continuous_back} is equivalent to sample from a Markov chain $p(x_{0:1})=p(x_1)\prod_{t_{i-1},t_i \in S_t}p(x_{t_{i-1}}|x_{t_{i}})$ with its trajectory $S_t=[0,t_1,t_2,...,t_i,..,1]$.
~\citet{song2020score} proves that the conventional ancestral sampling technique used in the DPMs~\citep{ho2020denoising} that models $p(x_{t_{i-1}}|x_{t_{i}})$ as a Gaussian distribution, can be perceived as a first-order solver for the reverse SDE in Eq.~$\eqref{eq:continuous_back}$. 
~\citet{bao2022analytic} finds that the optimal variance of $p(x_{t_{i-1}}|x_{t_{i}})\sim \mathcal{N}(x_{t_{i-1}}|\mu_{t_{i-1}|t_i},\Sigma_{t_{i-1}|t_i}(x_{t_i}))$ is 
\begin{align}
\label{eq:optimalvariance}
\Sigma^*_{t_{i-1}|t_i}(x_{t_i})=  \lambda_{t_i}^2 + \gamma_{t_i}^2 \frac{\sigma^2(t_i)}{\alpha(t_i)} \left(1-\mathbb{E}_{q(x_{t_i})}\left[\frac{1}{d} \left \| \mathbb{E}_{q(x_{0}|x_{t_i})}[\epsilon_\theta(x_{t_i},t_i)] \right \|_2^2\right]\right),
\end{align}
where, $\gamma_{t_i} = \sqrt{\alpha(t_{i-1})} - \sqrt{\sigma^2(t_{i-1}) - \lambda^2_{t_i}} \sqrt{\frac{\alpha(t_i)}{\sigma^2(t_i)}}$, $\lambda_{t_i}^2$ is the variance of $q(x_{t_{i-1}}|x_{t_i},x_0)$. 
AnalyticDPM~\citep{bao2022analytic} offers a significant reduction in discretization error during sampling and achieves faster convergence with fewer steps. Moreover, SN-DDPM~\citep{bao2022estimating} employs a Gaussian transition kernel with an optimal diagonal covariance instead of an isotropic covariance. This approach yields improved sample quality and likelihood compared to other SDE-based solvers within a limited number of steps.

\subsection{Applications of SDE-based solvers for stroke-based synthesis}

The stroke-based image synthesis is a representative downstream task suitable for SDE-based solvers. It involves the user providing a full-resolution image $x^{(g)}$ through the manipulation of RGB pixels, referred to as the guided image. The guided image $x^{(g)}$ possibly contains three levels of guidance: a high-level guide consisting of coarse colored strokes, a mid-level guide comprising colored strokes on a real image, and a low-level guide containing image patches from a target image. 

SDEdit~\citep{meng2021sdedit} solves the task 
by first starting from the guided image $x^{(g)}$ and adding Gaussian noise to disturb the guided images to $x_t$ and $q(x^{(g)}(t_0)|x^{(g)}) \sim \mathcal{N}(x_{t_0}|a(t_0)x^{(g)}, \sigma^2(t)I)$ same with Eq.~\eqref{eq:forward}. Subsequently, it solves the corresponding reverse stochastic differential equation (SDE) up to $t = 0$ to generate the synthesized image $x(0)$ discretizing Eq.~\eqref{eq:continuous_back}.
Apart from the discretization steps taken by the SDE-based solver, the key hyper-parameter for SDEdit is $t_0$, the time step from which we begin the image synthesis procedure in the reverse SDE.

\section{Gaussian mixture solvers for diffusion models}

In this section, we first show through both theoretical and empirical evidence, that the true reverse transition kernel can significantly diverge from Gaussian distributions as assumed in previous SOTA SDE-based solvers~\citep{bao2022analytic,bao2022estimating}, indicating that the reverse transition can be improved further by employing more flexible distributions (see Sec.~\ref{se:nongaussian}). This motivates us to propose a novel class of SDE-based solvers, dubbed \emph{Gaussian Mixture Solvers}, which determines a Gaussian mixture distribution via the \emph{generalized methods of the moments}~\citep{hansen1982large} using higher-order moments information for the true reverse transition (see Sec.~\ref{section:gmm}). The higher-order moments are estimated by a noise prediction network with multiple heads on training data, as detailed in Sec.~\ref{se:nonpara}.

\subsection{Suboptimality of Gaussian distributions for reverse transition kernels}
\label{se:nongaussian}

As described in Sec.~\ref{sec:bg}, existing state-of-the-art SDE-based solvers for DPMs~\citep{bao2022analytic,bao2022estimating} approximate the reverse transition $q(x_{t_{i-1}}|x_{t_i})$ using Gaussian distributions. Such approximations work well when the number of discretization steps in these solvers is large (e.g., $1000$ steps). However, for smaller discretization steps (such as when faster sampling is required), the validity of this Gaussian assumption will be largely broken. We demonstrate this theoretically and empirically below.

First, observe that $q(x_{s}|x_t)$\footnote{To enhance the clarity of exposition, we introduce the notation $s \doteq t_{i-1}$ and $t \doteq t_i$ to denote two adjacent time steps in the trajectory. Consequently, we refer to $q(x_s|x_t)$ as the reverse transition kernel in this context.} can be expressed as $q(x_{s}|x_t)=\int q(x_{s}|x_t,x_0)q(x_0|x_t)\dd x_0$, which is non-Gaussian for a general data distribution $q(x_0)$. For instance, for a mixture of Gaussian or a mixture of Dirac $q(x_0)$, we can prove that the conditional distributions $q(x_{s}|x_t)$ in the reverse process are non-Gaussian, as characterized by Proposition~\ref{proposition:dirac}, proven in  Appendix~\ref{proof:3_1}.

\begin{proposition}[Mixture Data Have non-Gaussian Reverse Kernel]
\label{proposition:dirac}
Assume $q(x_0)$ is a mixture of Dirac or a mixture of Gaussian distribution and the forward process is defined in Eq.~\eqref{eq:forward}. The reverse transition kernel $q(x_{s}|x_t), s<t$ is a Gaussian mixture instead of a Gaussian.
\end{proposition}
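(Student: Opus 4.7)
The plan is to factor the reverse kernel through $x_0$ via
$q(x_s\mid x_t) = \int q(x_s\mid x_t, x_0)\, q(x_0\mid x_t)\, \dd x_0$,
and then exploit two standard properties of the forward process in Eq.~\eqref{eq:forward}: (i) the bridge $q(x_s\mid x_t, x_0)$ is Gaussian with mean affine in $(x_t, x_0)$ and an $x_0$-independent isotropic covariance $\lambda_t^2 I$; and (ii) the likelihood $q(x_t\mid x_0) = \mathcal{N}(x_t; a(t)x_0, \sigma^2(t)I)$ is jointly Gaussian in $x_0$ and $x_t$. Given these, the Gaussian-mixture shape of $q(x_s\mid x_t)$ is entirely inherited from the posterior $q(x_0\mid x_t)$, which I will compute via Bayes for each of the two cases.

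For the mixture-of-Diracs case $q(x_0) = \sum_k \pi_k \delta(x_0 - \mu_k)$, Bayes gives $q(x_0\mid x_t) = \sum_k w_k(x_t)\, \delta(x_0 - \mu_k)$ with $w_k(x_t) \propto \pi_k \mathcal{N}(x_t; a(t)\mu_k, \sigma^2(t)I)$. Substituting this into the integral collapses it to a finite sum of Gaussians in $x_s$,
$q(x_s\mid x_t) = \sum_k w_k(x_t)\, \mathcal{N}\!\bigl(x_s; \mu_{s\mid t}(x_t,\mu_k),\, \lambda_t^2 I\bigr)$,
which is manifestly a Gaussian mixture whose component means depend on $x_t$ through the affine bridge mean. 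For the mixture-of-Gaussians case $q(x_0) = \sum_k \pi_k \mathcal{N}(x_0; m_k, S_k)$, I would apply Gaussian--Gaussian conjugacy within each component to rewrite $\mathcal{N}(x_t; a(t)x_0, \sigma^2(t)I)\mathcal{N}(x_0; m_k, S_k)$ as $Z_k(x_t)\, \mathcal{N}(x_0; \tilde m_k(x_t), \tilde S_k)$, yielding $q(x_0\mid x_t) = \sum_k \tilde w_k(x_t)\, \mathcal{N}(x_0; \tilde m_k(x_t), \tilde S_k)$. Plugging this in reduces each integral to a convolution of two Gaussians in $x_0$ (the bridge, which is Gaussian in $x_0$ because its mean is affine in $x_0$, and the conjugate posterior), which is again Gaussian in $x_s$. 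Summing over $k$ establishes the Gaussian-mixture form.

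The main obstacle I anticipate is bookkeeping rather than conceptual difficulty: I need to identify the explicit affine map $x_0 \mapsto \mu_{s\mid t}(x_t, x_0)$ in terms of $a(\cdot), \sigma(\cdot), \lambda_t^2$ so that the Gaussian convolution in the mixture-of-Gaussians case can be carried out cleanly and the resulting component covariance written down. A secondary subtlety is the phrase \emph{instead of a Gaussian} in the statement, which calls for a genuine non-collapse remark: because distinct modes $\mu_k$ (or $m_k$) induce distinct means of the component kernels through the affine bridge map, the mixture does not reduce to a single Gaussian for any nontrivial choice of weights and modes, so the approximation by a single Gaussian used in prior SDE solvers is strictly lossy at finite step size.
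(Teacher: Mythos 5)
Your proposal is correct and takes essentially the same route as the paper's own proof: both factor $q(x_s\mid x_t)$ through $x_0$ via $\int q(x_s\mid x_t,x_0)\,q(x_0\mid x_t)\,\dd x_0$, collapse the integral to a finite weighted sum of Gaussian bridge kernels in the Dirac case, and use Gaussian--Gaussian conjugacy (product/convolution of Gaussians) componentwise in the mixture-of-Gaussians case. Your closing remark on non-collapse (distinct modes give distinct bridge-mapped component means, so the mixture is genuinely non-Gaussian) is a small refinement the paper states only informally, but it does not change the argument.
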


\begin{figure}[t]
\centering
\begin{center}
\subfloat[logarithm mean of $L_2$-norm]{\includegraphics[width=0.45\linewidth]{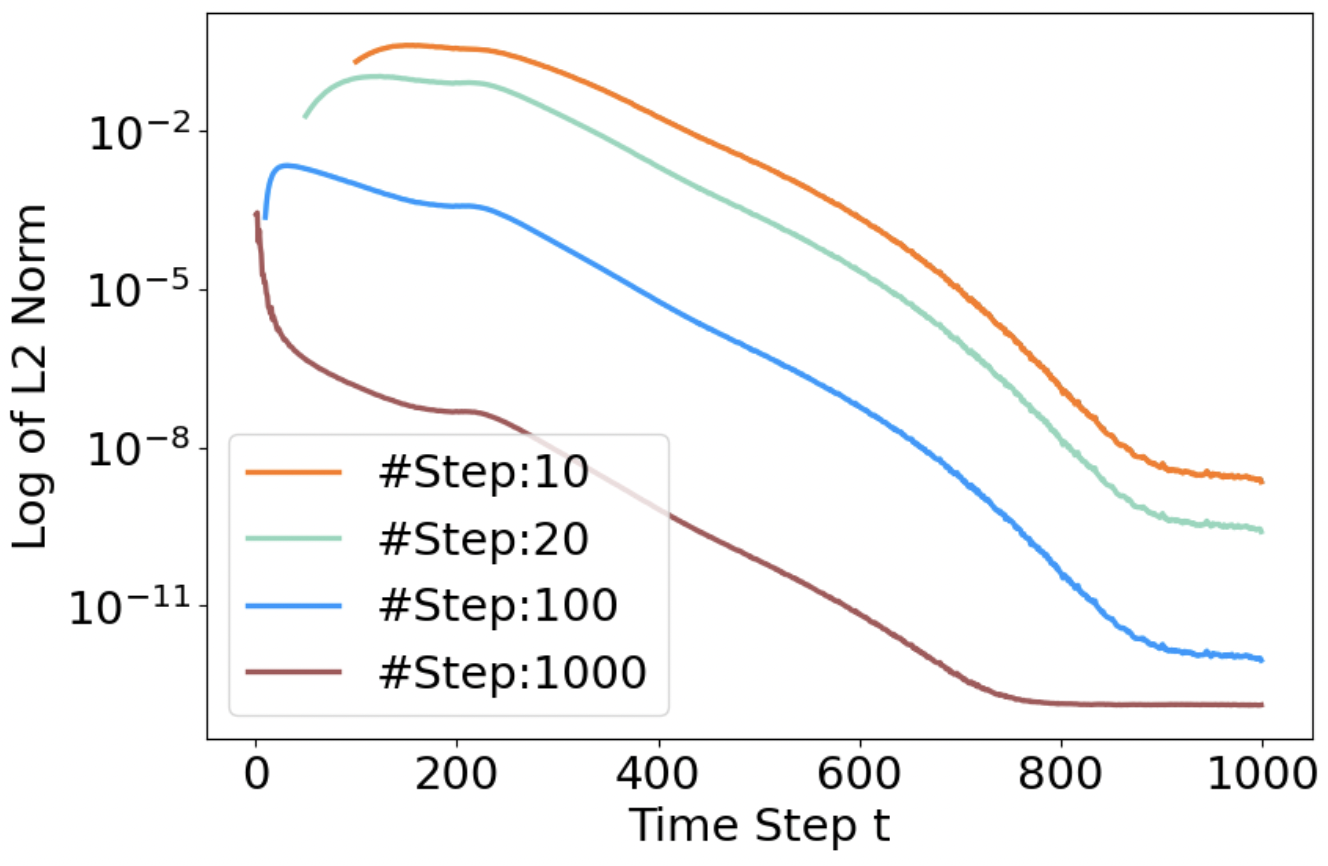}}\ 
\subfloat[logarithm median of $L_2$-norm]{\includegraphics[width=0.45\linewidth]{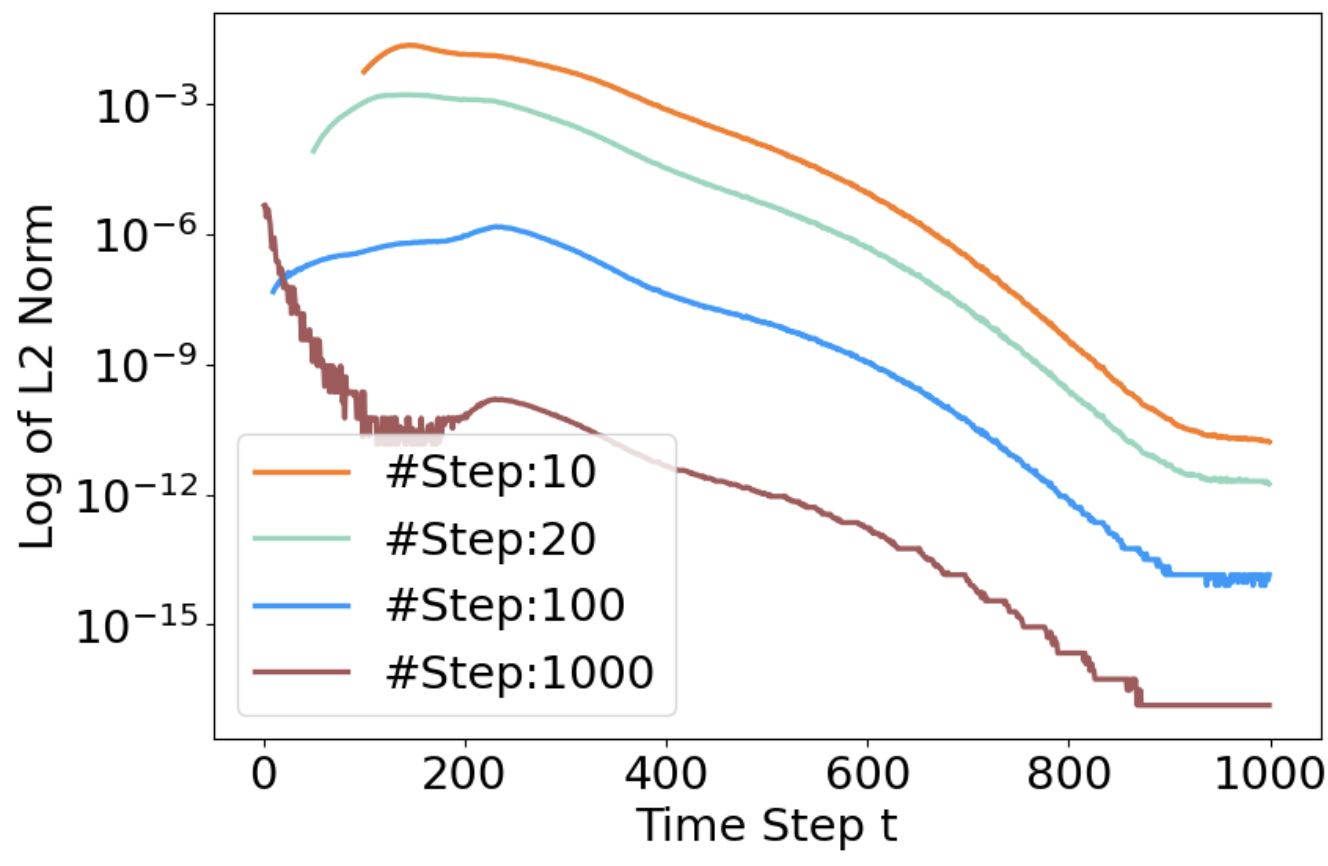}}\ 
\vspace{-.1cm}
\caption{
\textbf{Empirical evidence of suboptimality of Gaussian kernel on CIFAR10.} (a) and (b) plot the logarithm of the image-wise mean and median of $L_2$-norm between Gaussian-assumed third-order moments and estimated third-order moments.
Clearly, as the number of sampling steps decreases, the disparity between the following two third-order moments increases, denoting that the true transition kernel diverges further from the Gaussian distribution. See Appendix~\ref{sec:cifarerror} for more details.
}
\label{fig:momeerror}
\end{center}
\vspace{-.2cm}
\end{figure}

Empirically, we do not know the distributions of the real data (e.g., high-dimensional images) and cannot obtain an explicit form of $q(x_{s}|x_t)$. However, even in such cases, it is easy to validate that $q(x_{s}|x_t)$ are non-Gaussian. In particular, note that the third-order moment of one Gaussian distribution ($M_3^{(G)}$) can be represented by its first-order moment ($M_1$) and its second-order moment ($M_2$)~\footnote{The scalar case is $M_3^{(G)}=M_1^3+3M_1(M_2-M_1^2)$. See Appendix~\ref{sec:cifarerror} for the vector case.}, which motivates us to check whether the first three order moments of $q(x_{s}|x_t)$ satisfy the relationship induced by the Gaussian assumption.
We perform an experiment on CIFAR-10 and estimate the first three orders of moments $\hat{M_1}$, $\hat{M_2}$, $\hat{M}_3$ for $q(x_{s}|x_t)$ from data by high-order noise networks (see details in Sec.~\ref{sec:cifarerror}). 
As shown in Fig.~\ref{fig:momeerror}, we plot the mean and median of $l_2$-norm between the estimated third-order moment $\hat{M_3}$ and third-order moment $M_3^{(G)}$ calculated under the Gaussian assumption given the different number of steps at different time steps $t$. In particular, when the number of steps is large (i.e., $\#$Step=$1000$), the difference between the two calculation methods is small. As the number of steps decreases, the $l_2$-norm increases, indicating that the true reverse distribution $q(x_s|x_t)$ is non-Gaussian. With the time step closer to $t=0$, the $l_2$-norm increases too.

Both the theoretical and empirical results motivate us to weaken the Gaussian assumption in SDE-based solvers for better performance, especially when the step size is large.

\subsection{Sampling with Gaussian mixture transition kernel}
\label{section:gmm}

There are extensive choices of $p(x_s| x_t)$ such that it is more powerful and potentially fits $q(x_s| x_t)$ better than a Gaussian. In this paper, we choose a simple mixture of the Gaussian model as follows:
\begin{align}
\label{eq:gmmm}
&p(x_s|x_t)=\sum_{i=1}^{M}w_i \mathcal{N}(x_s|{\mu_i}(x_t),{\Sigma_i}(x_t)),\quad \sum_{i=1}^{M}w_i=1,
\end{align}
where $w_i$ is a scalar and $\mu_i(x_t)$ and $\Sigma_i(x_t)$ are vectors.
The reasons for choosing a Gaussian mixture model are three-fold. First, a Gaussian mixture model is multi-modal (e.g., see Proposition~\ref{proposition:dirac}), potentially leading to a better performance than Gaussian with few steps. Second, when the number of steps is large and the Gaussian is nearly optimal~\citep{sohl2015deep,bao2022analytic}, a designed Gaussian mixture model such as our proposed kernel in Eq.~\eqref{eq:final_transition} can degenerate to a Gaussian when the mean of two components are same, making the performance unchanged. Third, a Gaussian mixture model is relatively easy to sample.
For the sake of completeness, we also discuss other distribution families in Appendix~\ref{sec:EF}.

Traditionally, the EM algorithm is employed for estimating the Gaussian mixture~\citep{mclachlan2007algorithm}. However, it is nontrivial to apply EM here because we need to learn the reverse transition kernel in Eq.~\eqref{eq:gmmm} for all time step pairs $(s,t)$ by individual EM processes where we need to sample multiple $x_s$\footnote{Note that $x_s$ serves as the ``training sample'' in the corresponding EM process.} given a $x_t$ to estimate the parameters in the Gaussian mixture indexed by $(s,t)$. This is time-consuming, especially in a high-dimensional space (e.g., natural images) and we present an efficient approach.

For improved flexibility in sampling and training, diffusion models introduce the parameterization of the noise network $\epsilon(x_t,t)$~\citep{ho2020denoising} or the data prediction network $x_0(x_t,t)$~\citep{karras2022elucidating}. 
With such a network, the moments under the $q(x_s|x_t)$ measure can be decomposed into moments under the $q(x_0|x_t)$ measure so that sampling any $x_t$ to $x_s$ requires only a network whose inputs are $x_t$ and $t$, such as the decomposition shown in Eq.~\eqref{eq:highordercal}.
In previous studies, Gaussian transition kernel was utilized, allowing for the distribution to be directly determined after obtaining the estimated first-order moment and handcrafted second-order moment~\citep{ho2020denoising} or estimated first-order and second-order moment~\citep{bao2022analytic}. In contrast, such a feature is not available for the Gaussian mixtures transition kernel in our paper.

Here we present the method to determine the Gaussian mixture given a set of moments and we will discuss how to obtain the moments by the parameterization of the noise network in Sec.~\ref{se:nonpara}. Assume the length of the estimated moments set is $N$ and the number of parameters in the Gaussian mixture is $d$. We adopt a popular and theoretically sound method called the generalized method of moments (GMM)~\citep{hansen1982large} to learn the parameters by:
\begin{align}
\begin{small}
\label{eq:obgmm}
\min_{\theta}Q(\theta,M_1,...,M_N) = \min_{\theta} [\frac{1}{N_{c}}\sum_{i = 1}^{N_{c}}g(x_i,\theta) ]^{T}W[\frac{1}{N_{c}}\sum_{i = 1}^{N_{c}}g(x_i,\theta) ],
\end{small}
\end{align}
where $\theta\in \mathbb{R}^{d \times 1}$ includes all parameters (e.g., the mean of each component) in the Gaussian mixture defined in Eq.~(\ref{eq:gmmm}). For instance, $d=2M*D_{\textrm{data}}+M-1$ in Eq.~(\ref{eq:gmmm}), where $D_{\textrm{data}}$ represents the number of dimensions of the data and $\theta$ contains $M$ mean vectors with $D_{\textrm{data}}$ dimensions, $M$ variance vectors of with $D_{\textrm{data}}$ dimensions (considering only the diagonal elements), and $M-1$ weight coefficients which are scalar.
The component of $g(x_i,\theta)\in \mathbb{R}^{N \times 1}$ is defined by $g_n(x_i,\theta) = M_n(x_i) - M_n^{(GM)}(\theta)$, where $M_n(x_i)$ is the $n$-th order empirical moments stated in Sec.~\ref{se:nonpara} and $M_n^{(GM)}(\theta)$ is the $n$-th order moments of Gaussian mixture under $\theta$, and $W$ is a weighted matrix, $N_c$ is the number of samples. 

Theoretically, the parameter $\hat{\theta}_{\textrm{GMM}}$ obtained by GMM in Eq.~\eqref{eq:obgmm} 
consistently converges to the potential optimal parameters $\theta^{*}$ for Gaussian mixture models given the moments' condition because $\sqrt{N_c}(\hat{\theta}_{\textrm{GMM}}-\theta^*) \overset{d}{\rightarrow} \mathcal{N}(0,\mathbb{V}(\theta_{GMM}))$, as stated in Theorem 3.1 in~\citet{hansen1982large}.\looseness=-1

Hence, we can employ GMM to determine a Gaussian mixture transition kernel after estimating the moments of the transition kernel. To strike a balance between computational tractability and expressive power, we specifically focus on the first three-order moments in this work and define a Gaussian mixture transition kernel with two components shown in Eq.~\eqref{eq:final_transition} whose vectors $\mu^{(1)}_t$, $\mu^{(2)}_t$, and $\sigma^2_t$ are parameters to be optimized. This degree of simplification is acceptable in terms of its impact. Intuitively, such a selection has the potential to encompass exponential modes throughout the entire trajectory. Empirically, we consistently observe that utilizing a bimodal Gaussian mixture yields favorable outcomes across all experimental configurations.
\begin{eqnarray}
\label{eq:final_transition}
p(x_s|x_t) = \frac{1}{3}\mathcal{N}({\mu^{(1)}_t},{\sigma^2_t})+ \frac{2}{3}\mathcal{N}({\mu^{(2)}_t},{\sigma^2_t}),
\end{eqnarray}

meanwhile, under the simplification in our paper, the number of parameters $d=3*D_{\textrm{data}}$ in our Gaussian transition kernel is equal to the number of moments condition $N=3*D_{\textrm{data}}$. According to proposition~\ref{proposition:gmmvar}, under the selection of arbitrary weighted weights, the asymptotic mean (asymptotically consistent) and asymptotic variance of the estimator remain consistent, proof in Appendix~\ref{proof:prop2}. Hence, any choice of weighted weights is optimal. To further streamline optimization, we set $W = I$.

\begin{proposition}[Any weighted matrix is optimal for $d=N$]
\label{proposition:gmmvar} Assume the number of parameters $d$ equals the number of moments condition $N$, and $\epsilon^n_{\theta}(x_t,t) \overset{p}{\rightarrow} \mathbb{E}_{q(x_0|x_t)}[\textrm{diag}(\epsilon \otimes^{n-1} \epsilon)]$ (where $\otimes^n$ denotes n-fold outers product) which denotes $n$-th order noise network converging in probability.
The asymptotic variance $\mathbb{V}(\theta_{GMM})$ and the convergence speed of GMM remain the same no matter which weighted matrix is adopted in Eq.~\eqref{eq:obgmm}. Namely, any weighted matrix is optimal.
\end{proposition}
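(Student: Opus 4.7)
The plan is to reduce the proposition to the just-identified case of the classical GMM theory of \citet{hansen1982large}. Under the hypothesis $d=N$, the Jacobian $G := \mathbb{E}[\partial g(x,\theta)/\partial \theta^T]\big|_{\theta=\theta^*}$ is a square $N \times N$ matrix; assuming identification (addressed below), it is invertible. My first step is to observe that a parameter of dimension $d=N$ can make the $N$ sample moment conditions vanish exactly, so the minimum value of $Q(\theta,M_1,\ldots,M_N)$ in Eq.~\eqref{eq:obgmm} is zero for every positive-definite $W$. Consequently, $\hat\theta_{\textrm{GMM}}$ is characterized by the equation system $\frac{1}{N_c}\sum_{i} g(x_i,\hat\theta_{\textrm{GMM}})=0$, which does not involve $W$ at all. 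This immediately gives invariance of the point estimate under the choice of weighting matrix, and therefore identical finite-sample and asymptotic behavior.

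For the asymptotic variance claim, I would next invoke a central limit theorem for the empirical moment vector. The hypothesis $\epsilon^n_\theta(x_t,t) \overset{p}{\to} \mathbb{E}_{q(x_0|x_t)}[\textrm{diag}(\epsilon \otimes^{n-1}\epsilon)]$ provides consistency of the moment estimates, and the usual i.i.d. sampling structure yields $\sqrt{N_c}\, g_{N_c}(\theta^*) \overset{d}{\to} \mathcal{N}(0,S)$ for some covariance $S$. A first-order Taylor expansion of the defining equation $g_{N_c}(\hat\theta_{\textrm{GMM}})=0$ around $\theta^*$ then gives $\sqrt{N_c}(\hat\theta_{\textrm{GMM}}-\theta^*) = -G^{-1}\sqrt{N_c}\, g_{N_c}(\theta^*) + o_p(1)$, so $\mathbb{V}(\theta_{\textrm{GMM}}) = G^{-1} S (G^T)^{-1}$. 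Equivalently, one may plug the square invertible $G$ into the general sandwich formula $(G^TWG)^{-1}G^TWSWG(G^TWG)^{-1}$, use the identities $(G^TWG)^{-1}G^TW = G^{-1}$ and $WG(G^TWG)^{-1} = (G^T)^{-1}$ (valid precisely because $G$ is square and invertible), and observe that $W$ telescopes out, leaving $G^{-1}S(G^T)^{-1}$, which is independent of $W$.

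The main obstacle I anticipate is verifying the regularity conditions of Hansen's Theorem~3.1 for the specific parameterization in Eq.~\eqref{eq:final_transition}: global identifiability of $\theta^*$, smoothness of $M_n^{(GM)}(\theta)$ in $\theta$, and non-singularity of $G$ at $\theta^*$. Smoothness is immediate because the mixture moments are polynomials in the component means and variances. Identifiability from only the first three moments for a two-component mixture with fixed weights $(1/3, 2/3)$ and shared diagonal covariance is the delicate step; I would handle it by a direct algebraic check that the moment map $\theta \mapsto (M_1^{(GM)},M_2^{(GM)},M_3^{(GM)})$ has full-rank Jacobian at $\theta^*$, which gives local invertibility via the inverse function theorem and hence non-singularity of $G$. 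Once these regularity pieces are in place, the cancellation of $W$ in the sandwich formula delivers the claim.
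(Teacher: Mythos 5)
Your proposal is correct, and its core computation is the same as the paper's: derive the sandwich asymptotic variance $(G^TWG)^{-1}G^TW S W G(G^TWG)^{-1}$ and note that when $d=N$ the Jacobian $G$ (the paper's $\Gamma_0$) is square and invertible, so $W$ telescopes out and the variance collapses to $G^{-1}S(G^T)^{-1}$ for every admissible weighting matrix. Where you genuinely differ is your opening observation: in the just-identified case the minimizer is characterized by the estimating equation $\frac{1}{N_c}\sum_i g(x_i,\hat\theta)=0$, so the point estimate itself is independent of $W$, not merely its asymptotic law. That is a stronger and cleaner statement than the paper's, which works through the first-order condition $\partial Q_{N_c}/\partial\theta=0$, Taylor-expands the gradient, and only concludes $W$-invariance of the limiting variance. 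One caveat on your stronger claim: exact solvability of the sample moment system is not automatic in finite samples (the empirical or network-predicted moments need not lie in the range of the mixture moment map), so it should be asserted only with probability tending to one, using local invertibility of the moment map at $\theta^*$ plus consistency of the moment estimates --- which is exactly the role of the convergence-in-probability hypothesis on $\epsilon^n_\theta$. Your attention to the regularity conditions (identifiability, smoothness of the polynomial moment map, non-singularity of $G$ for the specific two-component parameterization in Eq.~\eqref{eq:final_transition}) addresses assumptions the paper leaves implicit; the paper simply inherits them from Hansen's Theorem~3.1. Either route delivers the proposition; yours buys finite-sample invariance of the estimator at the cost of an extra solvability argument, while the paper's gradient-based expansion applies uniformly to the over- and just-identified cases and specializes at the end.
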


What's more, we provide a detailed discussion on the selection of parameters such as the choice of different parameters to optimize and the different choices of weight $w_i$ in the Gaussian mixture in Appendix~\ref{sec:onlysolve}.
Combining with Sec.~\ref{sec:fid}, our empirical findings illustrate the efficacy of the GMS across various benchmarks via using the Gaussian transition kernel in Eq.~\eqref{eq:final_transition} fitted by the objective function shown in Eq.~\eqref{eq:final_obj} in Appendix~\ref{sec:final_obj} via the ADAN~\citep{xie2022adan} as optimization method.
Details regarding the parameters for this optimizer are provided in Appendix~\ref{sec:optimizer}.

\subsection{Estimating high-order moments for non-Gaussian reverse process}
\label{se:nonpara}

In Sec.~\ref{section:gmm}, we have explored the methodology for determining a Gaussian mixture transition kernel given a set of moments $M_1,..., M_n$. In the subsequent section, we will present an approach for estimating these moments utilizing noise networks and elucidate the process of network learning.

Given the forward process described by Eq.~\eqref{eq:forward}, it can be inferred that both $q(x_t|x_s)$ and $q(x_s|x_t,x_0)$ follow Gaussian distributions. Specifically, $q(x_t|x_s) \sim \mathcal{N}(x_t|a_{t|s}x_s,\sigma^2_{t|s} I)$~\citep{kingma2021variational}, where $a_{t|s}=\frac{\alpha(t)}{\alpha(s)}$ and $\sigma_{t|s}^2=\sigma^2(t)-a_{t|s}^2\sigma^2(s)$. Consequently, we can deduce that $\mathbb{E}_{q(x_s|x_t)}[x_s \otimes^n x_s] = \mathbb{E}_{q(x_0|x_t)q(x_s|x_t,x_0)}[x_s \otimes^n x_s]$, where $\otimes^n$ denotes $n$-fold outer product.
Thus, we can first utilize the Gaussian property of $q(x_s|x_t,x_0)$ and employ an explicit formula to calculate the moments under the measure of $q(x_s|x_t,x_0)$. 
What's more, we only consider the diagonal elements of higher-order moments for computational efficiency, similar to~\citet{bao2022estimating} since estimating full higher-order moments results in escalated output dimensions (e.g., quadratic growth for covariance and cubic for the third-order moments) and thus requires substantial computational demands.
The expression of diagonal elements of the third-order moment of the reverse transition kernel can be derived as:
\begin{eqnarray}
\label{eq:highordercal}
&\hat{M}_3=\mathbb{E}_{q(x_s|x_t)}[\textrm{diag}(x_s\otimes{x_s}\otimes{x_s})] =\mathbb{E}_{q(x_0|x_t)}\mathbb{E}_{q(x_s|x_t,x_0)}[\textrm{diag}(x_s\otimes{x_s}\otimes{x_s})]=\\
&\underbrace{[(\frac{a_{t|s}\sigma_s^2}{\sigma_t^2})^3\textrm{diag}({x_t}\otimes{x_t}\otimes{x_t})+3\lambda_t^2\frac{a_{t|s}\sigma_s^2}{\sigma_t^2}x_t]}_{\textrm{Constant term}} \nonumber \\
& +\underbrace{[\frac{3a_{t|s}^2\sigma_s^4a_{s|0}^2\beta_{t|s}^2}{\sigma_t^8}(\textrm{diag}({x_t}\otimes{x_t}))+\frac{a_{s|0}\beta_{t|s}}{\sigma_t^2}I] \odot  \mathbb{E}_{q(x_0|x_t)}[x_0] }_{\textrm{Linear term in $x_0$}}
 \nonumber  \\&+\underbrace{3\frac{a_{t|s}\sigma_s^2}{\sigma_t^2}(\frac{a_{s|0}\beta_{t|s}}{\sigma_t^2})^2 x_t \odot \mathbb{E}_{q(x_0|x_t)}[\textrm{diag}({x_0}\otimes{x_0})]}_{\textrm{Quadratic term in  $x_0$}} +\underbrace{(\frac{a_{s|0}\beta_{t|s}}{\sigma_t^2})^3 \mathbb{E}_{q(x_0|x_t)}[\textrm{diag}({x_0}\otimes{x_0}\otimes{x_0})]}_{\textrm{Cubic term in $x_0$}} \nonumber, 
\end{eqnarray}
where $\otimes$ is the outer product $\odot$ is the Hadamard product.
Additionally, $\lambda_t^2$ corresponds to the variance of $q(x_s|x_t,x_0)$, and further elaboration on this matter can be found in Appendix~\ref{proof:momenthigh}.

In order to compute the $n$-th order moment $\hat{M}_n$, as exemplified by the third-order moment in Eq.~\eqref{eq:highordercal}, it is necessary to evaluate the expectations $\mathbb{E}_{q(x_0|x_t)}[x_0], \dots, \mathbb{E}_{q(x_0|x_t)}[\textrm{diag}(x_0 \otimes^{n-1} x_0)]$.
Furthermore, by expressing $x_0$ as $x_0 = \frac{1}{\sqrt{\alpha(t)}}(x_t-\sigma(t)\epsilon)$, we can decompose $\hat{M}_n$ into a combination of terms $\mathbb{E}_{q(x_0|x_t)}[\epsilon], \dots, \mathbb{E}_{q(x_0|x_t)}[(\epsilon \odot^{n-1} \epsilon)]$, which are learned by neural networks. The decomposition of third-order moments $\hat{M}_3$, as outlined in Eq.~\eqref{eq:noisehigherorder}, is provided to illustrate this concept.
Therefore in training, we learn several neural networks ${\left \{ \epsilon^n_{\theta} \right \}_{n=1}^N }$ by training on the following objective functions:
\begin{align}
\label{eq:objectivehigh}
\min_{\left \{ \epsilon^n_{\theta} \right \}_{n=1}^N } \mathbb{E}_t\mathbb{E}_{q(x_0)q(\epsilon)}\left \| \epsilon ^n-\epsilon^n_{\theta}(x_t,t) \right \|_2^2  ,
\end{align}
where $\epsilon \sim \mathcal{N}(0,I)$, $x_t = \alpha(t)x_0+\sigma(t)\epsilon$, and $\epsilon^n$ denotes $\epsilon \odot^{n-1} \epsilon$. After training, we can use $\{ \epsilon^n_{\theta} \}_{n=1}^N$ to replace $\{ \mathbb{E}_{q(x_0|x_t)}[\epsilon \odot^{n-1}\epsilon] \}_{n=1}^N $ and estimate the moments $\{ \hat{M}_n \}_{n=1}^N $ of reverse transition kernel. 

However, in the present scenario, it is necessary to infer the network a minimum of $n$ times in order to make a single step of GMS. 
To mitigate the high cost of the aforementioned overhead in sampling, we adopt the two-stage learning approach proposed by~\citet{bao2022estimating}. Specifically, in the first stage, we optimize the noise network $\epsilon_{\theta}(x_t,t)$ by minimizing the expression $\min_{} \mathbb{E}_t\mathbb{E}_{q(x_0)q(\epsilon)}\left \| \epsilon -\epsilon_{\theta}(x_t,t) \right \|_2^2$ or by utilizing a pre-trained noise network as proposed by~\citep{ho2020denoising,song2020score}. In the second stage, we utilize the optimized network as the backbone and keep its parameters $\theta$ fixed, while adding additional heads to generate the $n$-th order noise network $\epsilon^n_{\theta,\phi_n}\left ( x_t,t \right )$.
\begin{equation}
\label{eq:extrahead}
    \epsilon^n_{\theta,\phi_n}\left ( x_t,t \right )  = \textrm{NN}\left(\epsilon_{\theta}\left ( x_t,t \right ),\phi_n \right) ,
\end{equation}
where $\textrm{NN}$ is the extra head, which is a small network such as convolution layers
or small attention block, parameterized by $\phi_n$, details in Appendix~\ref{sec:extra_head}. We present the second stage learning procedure in Algorithm~\ref{alg:learning}. Upon training all the heads, we can readily concatenate the outputs of different heads, as the backbone of the higher-order noise network is shared. By doing so, we obtain the assembled noise network $f^N\left ( x_t,t \right )$. When estimating the $k$-th order moments, it suffices to extract only the first $k$ components of the assembled noise network.
\begin{algorithm}[t]
   \caption{Learning of the high order noise network}
   \label{alg:learning}
\begin{algorithmic}[1]
   \STATE {\bfseries Input:} The $n$-th order noise network $\epsilon^n_{\theta,\phi_n}(x_t,t)$ and its
tunable parameter $\phi_n$
   \REPEAT
   \STATE $x_0 \sim q(x_0)$ 
   \STATE $t \sim \texttt{Uniform}([1,...,T])$ 
   \STATE $\epsilon_t  \sim \mathcal{N}(0,I)$ 
   \STATE $x_t = \alpha(t)x_0+ \sigma(t)\epsilon_t$ 
   \STATE Take a gradient step on 
   $\phi_n^{(t)} \leftarrow \nabla_{\phi_n}\left \| \epsilon_t^n-\epsilon^n_{\theta,\phi_n}(x_t,t) \right \| _2^2$
   \UNTIL{converged}
\end{algorithmic}
\end{algorithm}
\begin{algorithm}[t]
   \caption{Sampling via GMS with the first three order moments}
   \label{alg:sampling}
\begin{algorithmic}[1]
   \STATE {\bfseries Input:} The assembled noise network $f^3(x_t,t)$ in Eq.~\eqref{eq:finalnoise}
   \STATE $x_T \sim \mathcal{N}(0,I)$ 
   \FOR{$t=T$ {\bfseries to} $1$}
   \STATE $z_t  \sim \mathcal{N}(0,I)$ 
   \STATE $\hat{M}_1,\hat{M}_2,\hat{M}_3=h(f^3_{[1]}(x_t,t),f^3_{[2]}(x_t,t),f^3_{[3]}(x_t,t))$ in Appendix~\ref{proof:momenthigh}
   \STATE $\pi_k^*,\mu_k^*,\sigma_k^*=\min_{\pi_k,\mu_k,\sigma_k} Q(\pi_k,\mu_k,\sigma_k,\hat{M}_1,\hat{M}_2,\hat{M}_3)$ in Sec.~\ref{section:gmm}
   \STATE $i \sim \pi_k^*$, $x_{t-1} = \mu_i^*+\sigma_i^* z_t$
   \ENDFOR
\RETURN $x_0$
\end{algorithmic}  
\end{algorithm}
\begin{equation}
    f^N\left ( x_t,t \right )  = \textrm{concat}(\underbrace{[\epsilon_{\theta}(x_t,t),\epsilon^2_{\theta,\phi_2}\left ( x_t,t \right ),..,\epsilon^k_{\theta,\phi_k}\left ( x_t,t \right )}_{\textrm{Required for estimating $\hat{M}_k$}} ,...,\epsilon^N_{\theta,\phi_N}\left ( x_t,t \right )]),
\label{eq:finalnoise}
\end{equation}
To this end, we outline the GMS sampling process in Algorithm~\ref{alg:sampling}, where $f^3_{[2]}(x_t,t)$ represents $\textrm{concat}([\epsilon_{\theta}(x_t,t),\epsilon^2_{\theta,\phi_2}\left ( x_t,t  \right )])$. In comparison to existing methods with the same network structure, we report the additional memory cost of the assembled noise network in Appendix~\ref{sec:mem_time}.

\section{Experiment}

In this section, we first illustrate that GMS exhibits superior sample quality compared to existing SDE-based solvers when using both linear and cosine noise schedules~\citep{ho2020denoising,nichol2021improved}. Additionally, we evaluate various solvers in stroke-based image synthesis (i.e., SDEdit) and demonstrate that GMS surpasses other SDE-based solvers, as well as the widely used ODE-based solver DDIM~\citep{song2020denoising}.

\subsection{Sample quality on image data}
\label{sec:fid}
In this section, we conduct a quantitative comparison of sample quality using the widely adopted FID score~\citep{heusel2017gans}.
Specifically, we evaluate multiple SDE-based solvers, including a comparison with DDPM~\citep{ho2020denoising} and Extended AnalyticDPM~\citep{bao2022estimating} (referred to as SN-DDPM) using the even trajectory. 

As shown in Tab.~\ref{tab:fid}, GMS demonstrates superior performance compared to DDPM and SN-DDPM under the same number of steps in CIFAR10 and ImageNet 64 $\times$ 64. Specifically, GMS achieves a remarkable 4.44 improvement in FID given 10 steps on CIFAR10. Appendix~\ref{sec:fid_reduction} illustrates in more detail the improvement of GMS when the number of sampling steps is limited.
Meanwhile, we conduct GMS in ImageNet 256 $\times$ 256 via adopting the U-ViT-Huge~\citep{bao2023all} backbone as the noise network in Appendix~\ref{sec:uvit}.
Furthermore, taking into account the additional time required by GMS, our method still exhibits improved performance, as detailed in Appendix~\ref{sec:calculation_time}. 
For integrity, we provide a comparison with other SDE-based solvers based on continuous time diffusion such as Gotta Go Fast~\citep{jolicoeur2021gotta}, EDM~\citep{karras2022elucidating} and SEED~\citep{gonzalez2023seeds} in Appendix~\ref{sec:continuous_result} and shows that GMS largely outperforms other SDE-based solvers when the number of steps is less than 100.
In Appendix~\ref{sec:samples}, we provide generated samples from GMS.\looseness=-1

\begin{table*}[t]
\centering
\caption{
\textbf{Comparison with competitive SDE-based solvers w.r.t. FID score $\downarrow$ on CIFAR10 and ImageNet 64$\times$64.} 
our GMS outperforms existing SDE-based solvers in most cases. SN-DDPM denotes Extended AnalyticDPM from~\citet{bao2022estimating}.
}
\label{tab:fid}
\begin{small}
\begin{sc}
\setlength{\tabcolsep}{3pt}{
\begin{tabular}{lrrrrrrrrrrrr}
\toprule
& \multicolumn{6}{c}{CIFAR10 (LS)} \\
\cmidrule(lr){2-9}
\# timesteps $K$ & 
10 & 20 & 25 & 40 & 50 & 100 & 200 & 1000 \\
\midrule
DDPM, $\tilde{\beta}_t$ & 
43.14 & 25.28 & 21.63 & 15.24 & 15.21 & 10.94 & 8.23 & 5.11 \\
DDPM, $\beta_t$ & 
233.41 & 168.22 & 125.05 & 82.31 & 66.28 & 31.36 & 12.96 & 3.04 \\
SN-DDPM & 
21.87 & 8.32 & 6.91 & 4.99 & 4.58 & 3.74 & 3.34 & 3.71 \\
GMS (\textbf{ours}) &
\textbf{17.43} & \textbf{7.18} & \textbf{5.96}  & \textbf{4.52} & \textbf{4.16} & \textbf{3.26} & \textbf{3.01} & \textbf{2.76}  \\ 
\arrayrulecolor{black}\midrule
\end{tabular}}
\end{sc}
\end{small}

\vskip 0.05in
\begin{small}
\begin{sc}
\setlength{\tabcolsep}{2pt}{
\begin{tabular}{lrrrrrrrrrrrr}
\toprule
& \multicolumn{6}{c}{CIFAR10 (CS)} & \multicolumn{6}{c}{ImageNet 64 $\times$ 64} \\
\cmidrule(lr){2-7} \cmidrule(lr){8-13}
\# timesteps $K$ & 
10 & 25 & 50 & 100 & 200 & 1000 &
25 & 50 & 100 & 200 & 400 & 4000 \\
\midrule
DDPM, $\tilde{\beta}_t$ & 
34.76 & 16.18 & 11.11 & 8.38 & 6.66 & 4.92 &
29.21 & 21.71 & 19.12 & 17.81 & 17.48 & 16.55 \\
DDPM, $\beta_t$ & 
205.31 & 84.71 & 37.35 & 14.81 & 5.74 & \textbf{3.34} &
170.28 & 83.86 & 45.04 & 28.39 & 21.38 & 16.38 \\
SN-DDPM & 
16.33 & 6.05 & 4.19 & 3.83 & 3.72 & 4.08 &
27.58 & 20.74 & 18.04 & 16.72 & 16.37 & 16.22
\\
GMS (\textbf{ours}) &
\textbf{13.80}& \textbf{5.48} & \textbf{4.00} & \textbf{3.46} & \textbf{3.34} & 4.23 &
\textbf{26.50} & \textbf{20.13} & \textbf{17.29} & \textbf{16.60} & \textbf{15.98} & \textbf{15.79}\\
\arrayrulecolor{black}\midrule
\end{tabular}}
\end{sc}
\end{small}
\vspace{-.1cm}
\end{table*}

\subsection{Stroke-based image synthesis based on SDEdit~\citep{meng2021sdedit}}

\textbf{Evaluation metrics.} We evaluate the editing results based on realism and faithfulness similar with~\citet{meng2021sdedit}. To quantify the  realism of sample images, we use FID between the generated images and the target realistic image dataset. To quantify faithfulness, we report the $L_2$ distance summed over all pixels between the stroke images and the edited output images.

SDEdit~\citep{meng2021sdedit} applies noise to the stroke image $x^g$ at time step $t_0$ using $\mathcal{N}(x^g_{t_0}|\alpha(t)x^g,\sigma^2(t)I)$ and discretize the reverse SDE in Eq.~\eqref{eq:continuous_back} for sampling. Fig.~\ref{figure:weak_iterations} demonstrates the significant impact of $t_0$ on the realism of sampled images. As $t_0$ increases, the similarity to real images decreases. we choose the range from $t_0=0.3T$ to $t_0=0.5T$ ($T=1000$ in our experiments) for our further experiments since  sampled images closely resemble the real images in this range.

\begin{figure}[t]
\centering
\begin{center}
\subfloat[Trade-off between realism and faithfulness]{\label{fig:compare}\includegraphics[width=0.425\linewidth]{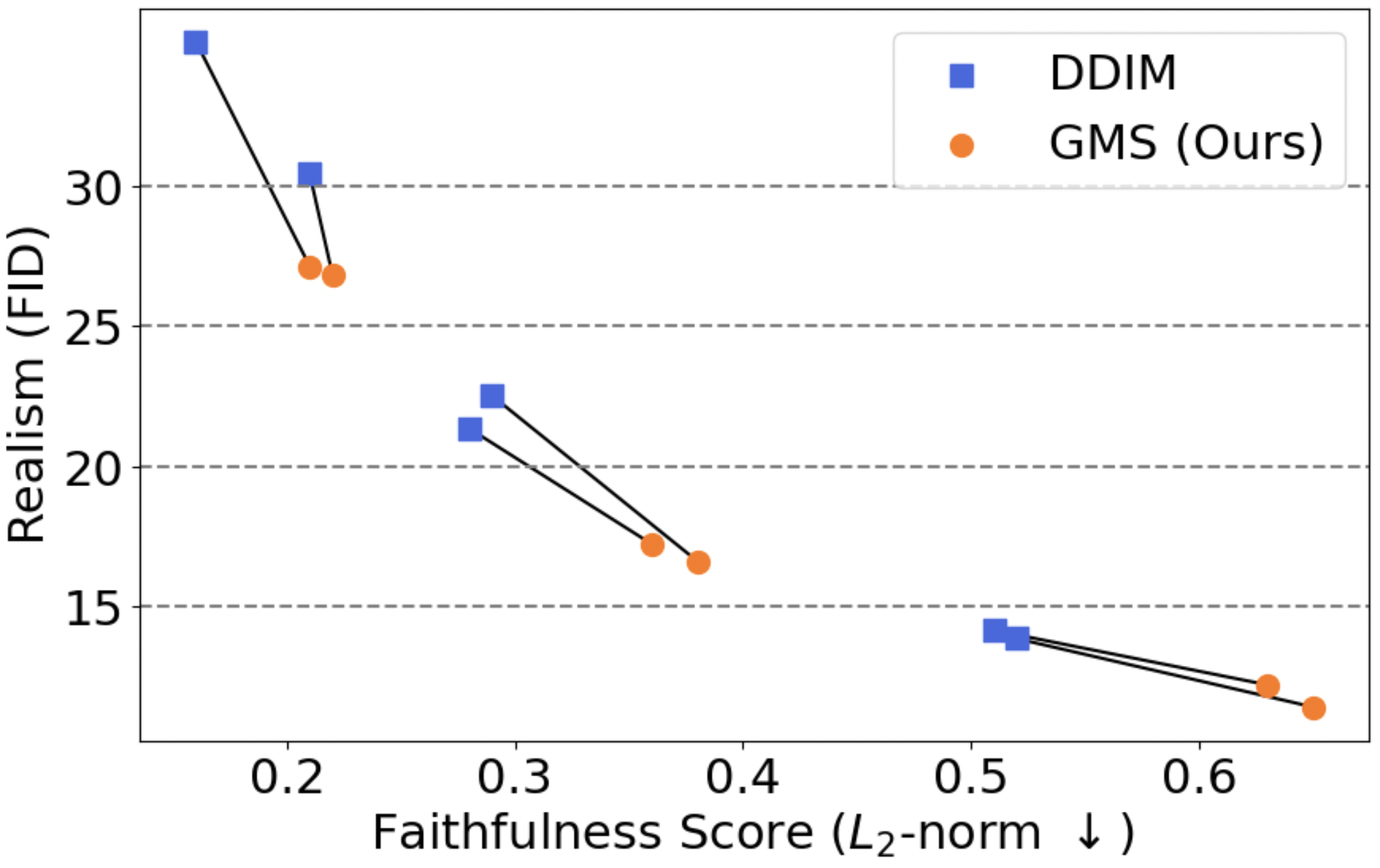}}\ 
\subfloat[$t_0=0.3T$]{\label{fig:sde1}\includegraphics[width=0.17\linewidth]{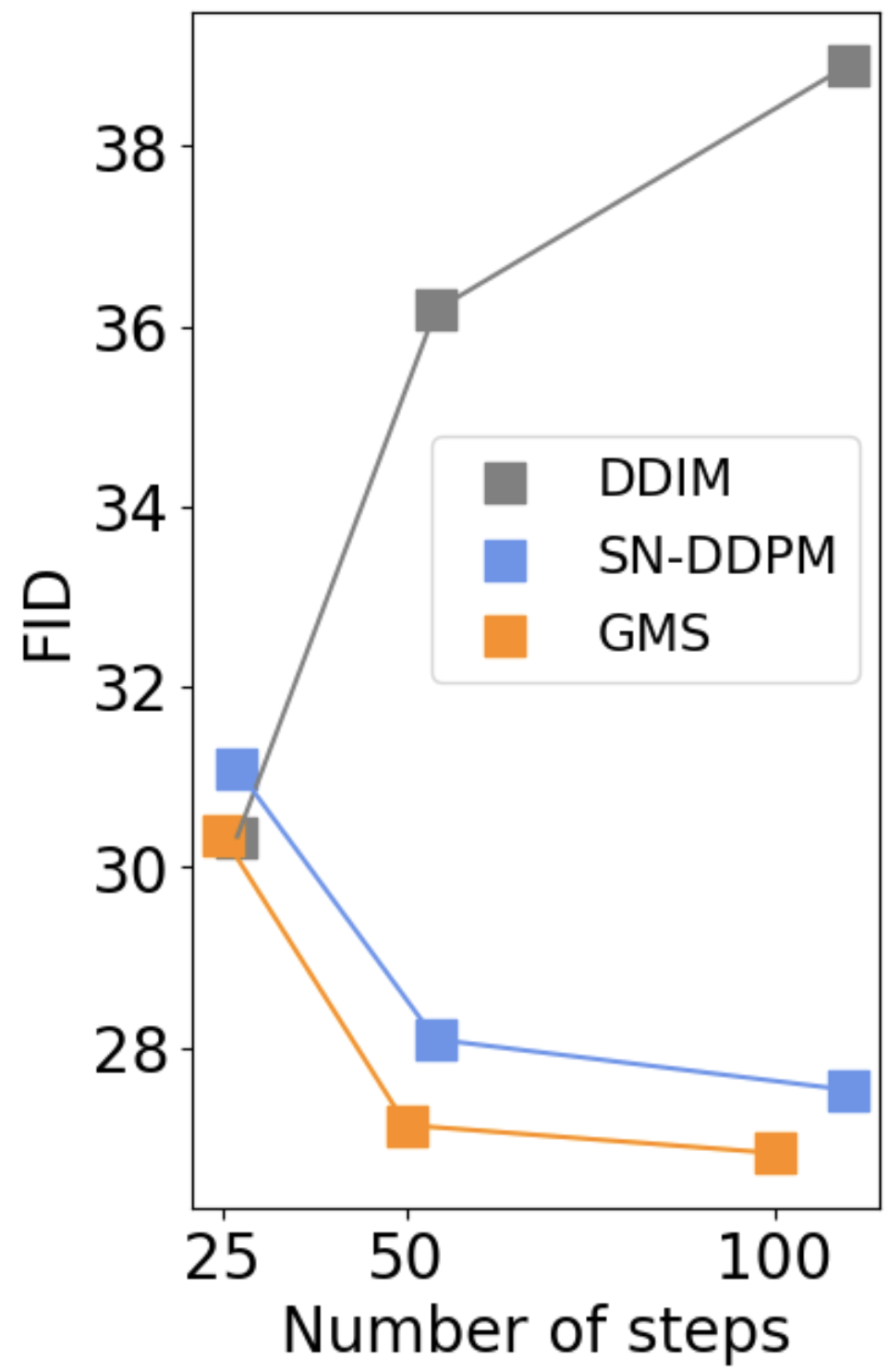}}\ 
\subfloat[$t_0=0.4T$]{\label{fig:sde2}\includegraphics[width=0.17\linewidth]{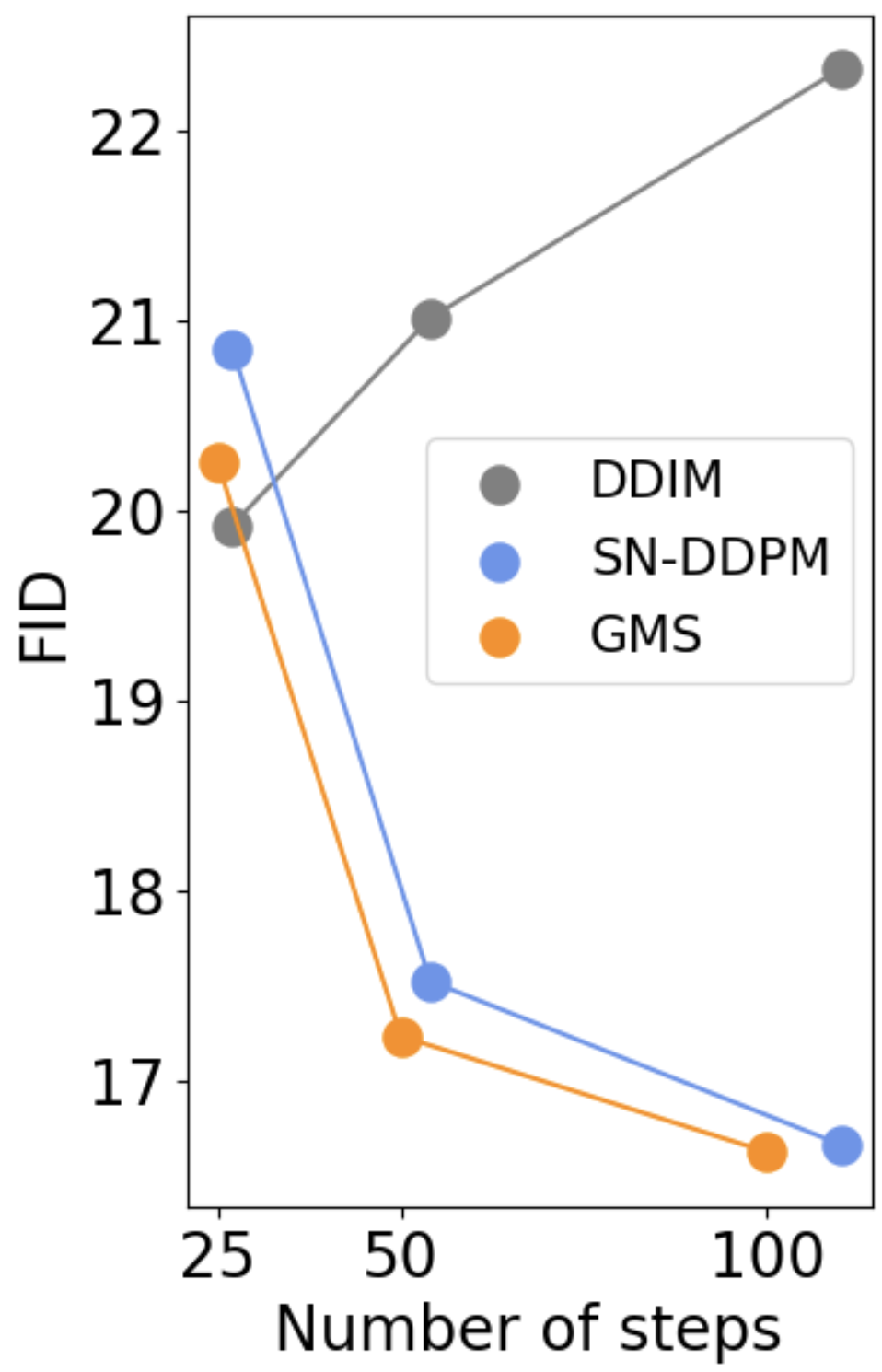}}\ 
\subfloat[$t_0=0.5T$]{\label{fig:sde3}\includegraphics[width=0.17\linewidth]{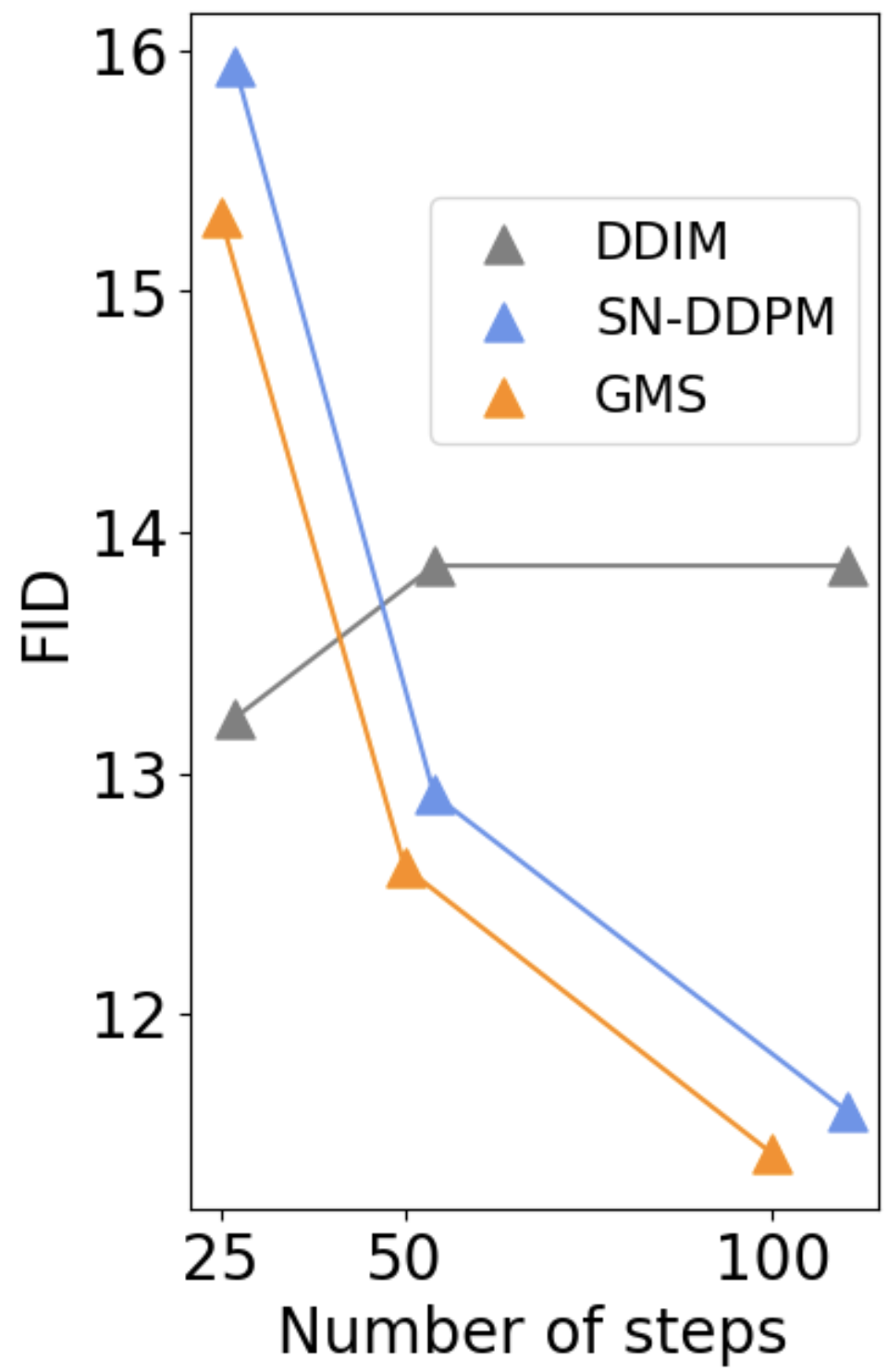}}\ 
\vspace{-.1cm}
\caption{\textbf{Result among different solvers in SDEdit.} $t_0$ denotes the time step of the start of reverse. \textbf{(a)}: The points on each line represent the same $t_0$ and the number of sampling steps. We select $t_0=[300,400,500]$, number of steps $=[50,100]$. \textbf{(b)}: When $t_0=300$, the effect of DDIM diminishes more prominently with the increase in the number of steps.}
\label{figure:resultsde}
\end{center}
\end{figure}

Fig.~\ref{figure:resultsde}\subref{fig:compare} illustrates that when using the same $t_0$ and the same number of steps, edited output images from GMS have lower faithfulness but with higher realism. 
This phenomenon is likely attributed to the Gaussian noise introduced by the SDE-based solver during each sampling step. This noise causes the sampling to deviate from the original image (resulting in low faithfulness) but enables the solver to transition from the stroke domain to the real image domain.
Fig.~\ref{figure:resultsde}\subref{fig:sde1} to Fig.~\ref{figure:resultsde}\subref{fig:sde3} further demonstrates this phenomenon to a certain extent. The realism of the sampled images generated by the SDE-based solver escalates with an increase in the number of sampling steps. Conversely, the realism of the sampled images produced by the ODE-based solver diminishes due to the absence of noise, which prevents the ODE-based solver from transitioning from the stroke domain to the real image domain. Additionally, in the SDEdit task, GMS exhibits superior performance compared to SN-DDPM~\citep{bao2022estimating} in terms of sample computation cost. Fig.~\ref{fig:sdeditsamplesmain} shows the samples using DDIM and GMS when $t_0=400$ and the number of steps is $40$.\looseness=-1

\begin{figure}[t]
\centering
\begin{center}
\subfloat[Real Images]{\includegraphics[width=0.48\linewidth]{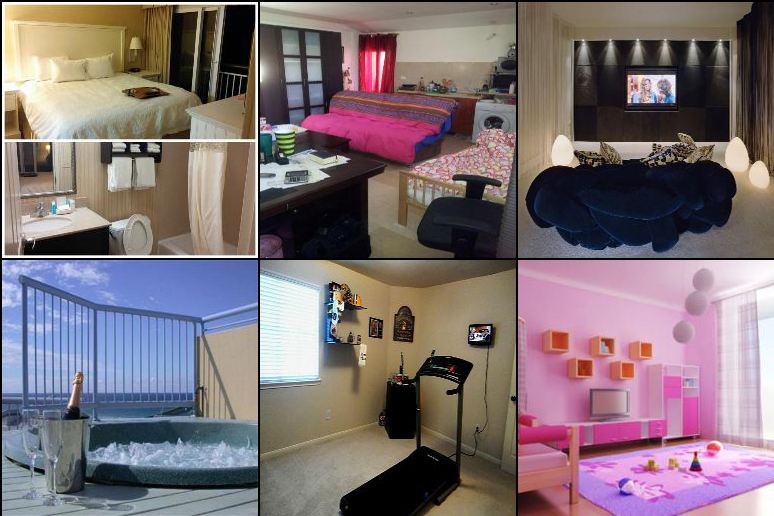}}\ 
\subfloat[Stroke]{\includegraphics[width=0.48\linewidth]{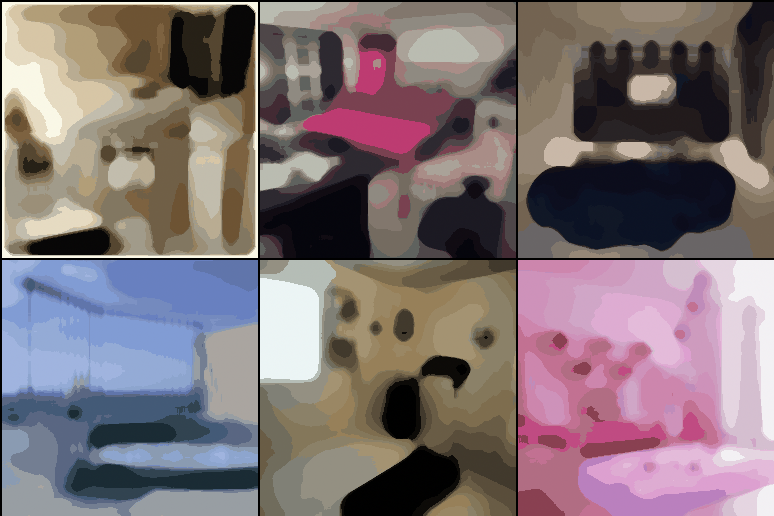}}\ 
\subfloat[GMS]{\includegraphics[width=0.48\linewidth]{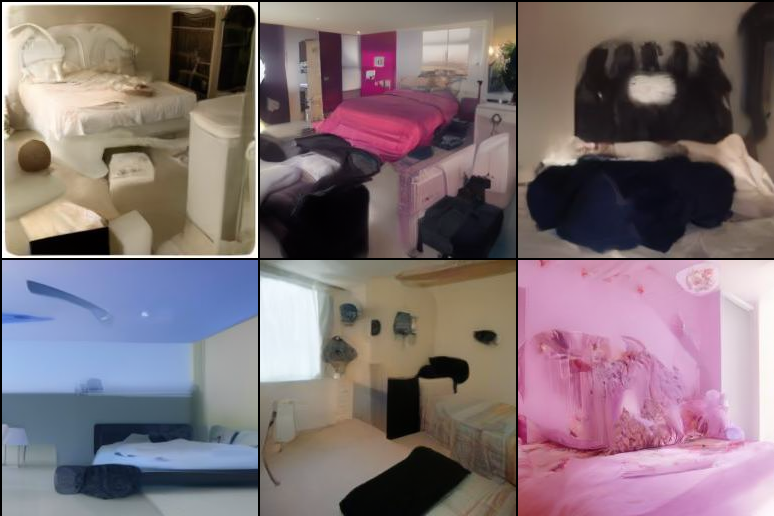}}\ 
\subfloat[DDIM]{\includegraphics[width=0.48\linewidth]{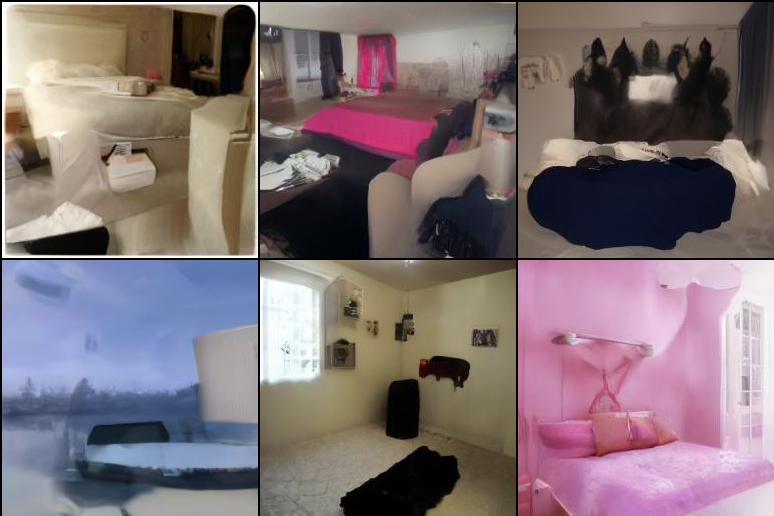}}\ 
\vspace{-.1cm}
\caption{\textbf{SDEdit samples of GMS and DDIM.} 
Due to the presence of noise, SDE-based solvers, such as GMS \textbf{OURS}, generate images with more details.}
\label{fig:sdeditsamplesmain}
\end{center}
\end{figure}

\section{Related work}
\label{sec:related_work}
\textbf{Faster solvers.} In addition to SDE-based solvers, there are works dedicated to improving the efficiency of ODE-based solvers~\citep{lu2022dpm, liu2022pseudo, dockhorn2022genie}. Some approaches use explicit reverse transition kernels, such as those based on generative adversarial networks proposed by~\citet{xiao2021tackling} and~\citet{wang2022diffusion}. \citet{gao2020learning} employ an energy function to model the reverse transition kernel.~\citet{zhang2021diffusion} use a flow model for the transition kernel.

\textbf{Non-Gaussian diffusion.} 
Apart from diffusion, some literature suggests using non-Gaussian forward processes, which consequently involve non-Gaussian reverse processes.~\citet{bansal2022cold} introduce a generalized noise operator that incorporates noise.~\citet{nachmani2021non} incorporate Gaussian mixture or Gamma noise into the forward process. While these works replace both the forward and reverse processes with non-Gaussian distributions, our approach aims to identify a suitable combination of non-Gaussian distributions to model the reverse process.\looseness=-1

\section{Conclusion}
This paper presents a novel Gaussian mixture solver (GMS) for diffusion models. GMS relaxes the Gaussian reverse kernel assumption to reduce discretization errors and improves the sample quality under the same sampling steps. Experimental results show that GMS outperforms existing SDE-based solvers, achieving a remarkable 4.44 improvement in FID compared to the state-of-the-art SDE-based solver proposed by Bao et al.~\citep{bao2022estimating} given 10 steps. Furthermore, due to the presence of noise, SDE-based solvers prove more suitable for stroke-based synthesis tasks and GMS still outperforms state-of-the-art SDE-based solvers.\looseness=-1

\textbf{Limitations and broader impacts.}
While GMS enhances sample quality and potentially accelerates inference speed compared to existing SDE-based solvers, employing GMS still fall short of real-time applicability. Like other generative models, diffusion models can generate problematic fake content, and the use of GMS may amplify these undesirable effects.

\section*{Acknowledgement}

This work was supported by NSF of China (Nos. 62076145); Beijing Outstanding Young Scientist Program (No. BJJWZYJH012019100020098); Major Innovation \& Planning Interdisciplinary Platform for the ``Double-First Class'' Initiative, Renmin University of China; the Fundamental Research Funds for the Central Universities, and the Research Funds of Renmin University of China (No. 22XNKJ13). C. Li was also sponsored by Beijing Nova Program (No. 20220484044).

\bibliography{ref}
\bibliographystyle{plainnat}

\newpage
\appendix
\onecolumn
\section{Proof}
\label{sec:proof}
\subsection{Proof of Proposition~{\ref{proposition:dirac}}}
\label{proof:3_1}
When $q(x_0)$ is a mixture of Dirac distribution which means that $q(x_0)=\sum_{i=1}^{M}w_{i}\delta (x-x_i), \sum_{i=1}^{M}w_{i}=1$, which has total $M$ components, and when the forward process $q(x_t|x_0)$ is a Gaussian distribution as Eq.~\eqref{eq:forward}, the reverse process $q(x_s|x_t)$ would be:
\begin{align*}
&q(x_{s}|x_t)=\int q(x_{s}|x_t,x_0)q(x_0|x_t)dx_0 =\int q(x_{s}|x_t,x_0)q(x_0)q(x_t|x_0)/q(x_t) dx_0   \\
&=1/q(x_t)\int q(x_{s}|x_t,x_0)q(x_0)q(x_t|x_0) dx_0=1/q(x_t) \sum_{i=1}^{M}w_i q(x_{s}|x_t,x_0^i)q(x_t|x_0^i) \nonumber
\end{align*}
According to the definition of forward process, the distribution $q(x_t|x_0^i)=\mathcal{N}(x_t|\sqrt{\bar{a}_t}x_0^i,(1-\bar{a}_t) I)$. Due to the Markov property of forward process, when $t>s$, we have $q(x_s,x_t|x_0)q(x_s|x_0)q(x_t|x_s)$.
The term $q(x_s|x_t,x_0)$ would be viewed as a Bayesian posterior resulting from a prior $q(x_s|x_0)$, updated with a likelihood term $q(x_t|x_s)$. And therefore 
\begin{align*}
&q(x_{s}|x_t,x_0^i) = N(x_s|\mu_q(x_t,x_0),\Sigma_q(x_t,x_0)I) = \mathcal{N}(x_s|\frac{a_{t|s}\sigma^2_s}{\sigma^2_t} x_t+\frac{a_s\sigma_{t|s}^2}{\sigma^2_t}x_0, \frac{\sigma^2_s\sigma_{t|s}^2}{\sigma^2_t}I)
\end{align*}
It is easy to prove that, the distribution $q(x_{s}|x_t)$ is a mixture of Gaussian distribution:
\begin{align*}
&q(x_{s}|x_t) \propto \sum_{i=1}^{M}w_i q(x_{s}|x_t,x_0^i)q(x_t|x_0^i) \\
&=\sum_{i=1}^{M}w_i\mathcal{N}(x_t|\sqrt{\bar{a}_t}x_0^i,(1-\bar{a_t})I)*\mathcal{N}(x_s|\mu_q(x_t,x_0),\Sigma_q(x_t,x_0)) \nonumber
\end{align*}
When $t$ is large, $s$ is small, $\sigma_{t|s}^2$ would be large, meaning that the influence of $x_0^i$ would be large. 

Secondly, when $q(x_0)$ is a mixture of Gaussian distribution which means that $q(x_0)=\sum_{i=1}^{M}w_{i}\mathcal{N}(x_0^i|\mu_i,\Sigma_i ), \sum_{i=1}^{M}w_{i}=1$. For simplicity of analysis, we assume that this distribution is a one-dimensional distribution or that the covariance matrix is a high-dimensional Gaussian distribution with a diagonal matrix $\Sigma_i = \textrm{diag}_i(\sigma^2)$. Similar to the situation above, for each dimension in the reverse process:
\begin{align*}
&q(x_{s}|x_t)= 1/q(x_t)\int q(x_{s}|x_t,x_0)q(x_0)q(x_t|x_0) dx_0 \\
&= \sum_{i=1}^{M}w_i/q(x_t)\int q(x_{s}|x_t,x_0)\mathcal{N}(x_0|\mu_i,\Sigma_i)q(x_t|x_0) dx_0\nonumber \\
&= \sum_{i=1}^{M}w_i/q(x_t) \int\frac{1}{\sqrt{2\pi}\sigma_q}e^{-\frac{(x_0^i-\mu_q)^2}{\sigma_q^2} } \frac{1}{\sqrt{2\pi}\sigma_i} e^{-\frac{(x_0^i-\mu_i)^2}{\sigma_i^2} }\frac{1}{\sqrt{2\pi}\sqrt{1-\bar{a}_t}} e^{-\frac{(x_t-\sqrt{\bar{a}_tx_0})^2}{1-\bar{a}_t}} dx_0 \nonumber\\
&= \sum_{i=1}^{M}w_i/q(x_t) \int Z_i \frac{1}{\sqrt{2\pi}\sigma_x}e^{-\frac{(x_0^i-\mu_x)^2}{\sigma_x^2} } e^{-\frac{(x_s-\mu(x_t))^2}{\sigma(x_t)} }dx_0 \nonumber
\end{align*}
And $q(x_{s}|x_t)$ could be a Gaussian mixture which has $M$ component.

\subsection{Proof of Proposition~{\ref{proposition:gmmvar}}}
\label{proof:prop2}

Recall that our objective function is to find optimal parameters: 
\begin{align}
\hat{\theta} = \underset{\theta}{\textrm{argmin}}[{\underbrace{Q_{N_c}(\theta)}_{1 \times 1}}  ]= \underset{\theta}{\textrm{argmin}}[{\underbrace{g_{N_c}(\theta)^T}_{1 \times N}}{\underbrace{W_{N_c}}_{N \times N}}{\underbrace{g_{N_c}(\theta)}_{1 \times N}}  ]
\end{align}
where $g_M(\theta)$ is the moment conditions talked about in Sec.~\ref{section:gmm}, $W$ is the weighted Matrix, ${N_c}$ is the sample size. When solving such problems using optimizers, which are equivalent to the one we used when selecting $\theta_{GMM}$ such that $\frac{\partial Q_T(\theta_{GMM})}{\partial \theta}=0 $, and its first derivative:
\begin{align}
\underbrace{\frac{\partial Q_{N_c}(\theta)}{\partial \theta} }_{d \times 1} =\begin{pmatrix}
 \frac{\partial Q_{N_c}(\theta)}{\partial \theta_1} \\
 \frac{\partial Q_{N_c}(\theta)}{\partial \theta_2}\\
 \frac{\partial Q_{N_c}(\theta)}{\partial \theta_3}
\end{pmatrix},\frac{\partial Q_{N_c}(\theta)}{\partial \theta_m}=2\underbrace{[\frac{1}{{N_c}}\sum_{i=1}^{{N_c}}\frac{\partial g(x_i,\theta)}{\partial \theta_m}  ]^T}_{1 \times N} \underbrace{W_{N_c}}_{N \times N}   \underbrace{[\frac{1}{{N_c}}\sum_{i=1}^{M}g(x_i,\theta) ]}_{N \times 1}  
\end{align}
its second derivative (the Hessian matrix):
\begin{align}
\underbrace{\frac{\partial^2 Q_{N_c}(\theta)}{\partial \theta^2} }_{d \times d} &=\begin{pmatrix}
 \frac{\partial^2 Q_{N_c}(\theta)}{\partial \theta_1\theta_1} & \frac{\partial^2 Q_{N_c}(\theta)}{\partial \theta_1\theta_2} & ... & \frac{\partial^2 Q_{N_c}(\theta)}{\partial \theta_1\theta_d} \\
 \frac{\partial^2 Q_{N_c}(\theta)}{\partial \theta_2\theta_1} & ... & ...  & ... \\
 ... & ... & ... & ...\\
 ... & ... & ... &\frac{\partial^2 Q_{N_c}(\theta)}{\partial \theta_d\theta_d}
\end{pmatrix}\\[5mm]
,\frac{\partial^2 Q_{N_c}(\theta)}{\partial \theta_i\theta_j}&=2[\frac{1}{M}\sum_{i=1}^{N_c}\frac{\partial g(x_i,\theta)}{\partial \theta_i}  ]^T W_{N_c} [\frac{1}{N_c}\sum_{i=1}^{N_c}\frac{\partial g(x_i,\theta)}{\partial \theta_j} ] \nonumber\\
&+2[\frac{1}{N_c}\sum_{i=1}^{N_c} \frac{\partial^2 g(x_i,\theta)}{\partial \theta_i \theta_j} ] W_{N_c}[\frac{1}{N_c}\sum_{i=1}^{N_c}g(x_i,\theta)]. \nonumber
\end{align}
By Taylor's expansion of the gradient around optimal parameters $\theta_0$, we have:
\begin{align}
&\frac{\partial Q_{N_c}(\theta_{GMM})}{\partial \theta} - \frac{\partial Q_{N_c}(\theta_{0})}{\partial \theta} \approx  \frac{\partial^2 Q_{N_c}(\theta_0)}{\partial \theta \partial \theta^T}(\theta_{GMM}-\theta) \\
&\longmapsto (\theta_{GMM}-\theta) \approx -(\frac{\partial^2 Q_{N_c}(\theta_0)}{\partial \theta \partial \theta^T})^{-1}\frac{\partial Q_{N_c}(\theta_0)}{\partial \theta}  \nonumber.
\end{align} 
Consider one element of the gradient vector $\frac{\partial Q_{N_c}(\theta_0)}{\partial \theta_m}$
\begin{align}
\frac{\partial Q_{N_c}(\theta_0)}{\partial \theta_m} =2[\underbrace{\frac{1}{M}\sum_{i=1}^{N_c}\frac{\partial g(x_i,\theta_0)}{\partial \theta_m}  }_{\overset{p}{\rightarrow} \mathbb{E}(\frac{\partial g(x_i,\theta_0)}{\partial \theta_m})=\Gamma_{0,m}} ]^T \underbrace{W_{N_c}}_{\overset{p}{\rightarrow}W } [\underbrace{\frac{1}{N_c}\sum_{i=1}^{N_c}g(x_i,\theta_0)  }_{\overset{p}{\rightarrow} \mathbb{E}[g(x_i,\theta_0)]=0} ] .
\end{align}
Consider one element of the Hessian matrix $\frac{\partial^2 Q_{N_c}(\theta_0)}{\partial \theta_i \partial \theta_j^T}$
\begin{align}
\frac{\partial^2 Q_{N_c}(\theta_0)}{\partial \theta_i \partial \theta_j^T} &= 2[\frac{1}{N_c}\sum_{i=1}^{N_c}\frac{\partial g(x_i,\theta_0)}{\partial \theta_i}]^T W_{N_c} [\frac{1}{N_c}\sum_{i=1}^{N_c}\frac{\partial g(x_i,\theta_0)}{\partial \theta_j}]\\
&+2[\frac{1}{N_c}\sum_{i=1}^{N_c}\frac{\partial^2 g(x_i,\theta_0)}{\partial \theta_i \theta_j}]^T W_{N_c}[\frac{1}{N_c}\sum_{i=1}^{N_c}g(x_i,\theta_0)]\overset{p}{\rightarrow} 2\Gamma_{0,i}^T W  \Gamma_{0,j}  \nonumber.
\end{align}
Therefore, it is easy to prove that $\theta_{GMM}-\theta_0 \overset{p}{\rightarrow} 0$, and uses law of large numbers we could obtain,
\begin{align}
\sqrt{T}\frac{\partial Q_T(\theta_0)}{\partial \theta_m}=2[\underbrace{\frac{1}{M}\sum_{i=1}^M\frac{\partial g(x_i,\theta_0)}{\partial \theta_m} }_{\overset{p}{\rightarrow}\mathbb{E}(\frac{\partial g(X,\theta_0)}{\partial \theta_m})=\Gamma_{0,m} } ]^T \underbrace{W_M}_{\overset{p}{\rightarrow}W }[\underbrace{\frac{1}{\sqrt{T}}\sum_{i=1}^M g(x_i,\theta_0)}_{\overset{d}{\rightarrow}\mathcal{N}(0,\mathbb{E}(g(X,\theta_0)g(X,\theta_0)^T)) } ]\overset{d}{\rightarrow}2\Gamma_{0,m}^TW\mathcal{N}(0,\Phi_0)  ,
\end{align}
therefore, we have
\begin{align}
\sqrt{T}(\theta_{GMM}-\theta_0)\approx -( \frac{\partial^2 Q_T(\theta_0)}{\partial \theta \partial \theta^T})^{-1} \sqrt{T} \frac{\partial Q_T(\theta_0)}{\partial \theta}\\
\overset{d}{\rightarrow}\mathcal{N}(0,(\Gamma_0^TW\Gamma_0)^{-1}\Gamma_0^TW\Phi_0W\Gamma_0(\Gamma_0^TW\Gamma_0)^{-1}) . \nonumber
\end{align}
When the number of parameters $d$ equals the number of moment conditions $N$, $\Gamma_0$ becomes a (nonsingular) square matrix, and therefore,
\begin{align}
\sqrt{T}(\theta_{GMM}-\theta_0)&\overset{d}{\rightarrow}\mathcal{N}(0,(\Gamma_0^TW\Gamma_0)^{-1}\Gamma_0^TW\Phi_0W\Gamma_0(\Gamma_0^TW\Gamma_0)^{-1}) \\
 & = \mathcal{N}(0,\Gamma_0^{-1}W^{-1}(\Gamma_0^T)^{-1}\Gamma_0^TW\Phi_0W\Gamma_0\Gamma_0^{-1}W^{-1}(\Gamma_0^T)^{-1}) \nonumber \\
 & = \mathcal{N}(0,\Gamma_0^{-1}\Phi_0(\Gamma_0^T)^{-1}) \nonumber,
\end{align}
which means that the foregoing observation suggests that the selection of $W_{N_c}$ has no bearing on the asymptotic variance of the GMM estimator. Consequently, it implies that regardless of the specific method employed to determine $W_{N_c}$, provided the moment estimates are asymptotically consistent, $W_{N_c}$ serves as the optimal weight matrix, even when dealing with small samples.

The moments utilized for fitting via the generalized method of moments in each step are computed based on the noise network's first $n$-th order. To ensure adherence to the aforementioned proposition, it is necessary to assume that the $n$-th order of the noise network converges with probability to $\mathbb{E}_{q(x_0|x_t)}[\textrm{diag}(\epsilon \otimes^{n-1} \epsilon)]$, details in Appendix~\ref{proof:momenthigh}. Consequently, the $n$-th order moments derived from the noise networks converge with probability to the true moments.
Therefore, any choice of weight matrix is optimal.
\subsection{Non-Gaussian distribution of transition kernel within large discretization steps}
\label{prove:non-gau-distribution}
we can apply Bayes' rule to the posterior distribution $q(x_t|x_{t+\Delta t})$ as follows:
\begin{align}
q(x_t|x_{t+\Delta t}) &= \dfrac{q(x_{t+\Delta t}|x_{t})q(x_t)}{q(x_{t+\Delta t})} =q(x_{t+\Delta t}|x_{t})\exp(\log (q(x_t))-\log(q(x_{t+\Delta t}))) \\
&\propto \exp(-\dfrac{\left \| x_{t+\Delta t}-x_t-f_t(x_t)\Delta t \right \|^2 }{2g_t^2\Delta t} +\log p(x_t)-\log(x_{t+\Delta t})), \nonumber
\end{align}
where $\Delta t$ is the step size, $q(x_t)$ is the marginal distribution of $x_t$. When $x_{t+\Delta t}$ and $x_{t}$ are close enough, using Taylor expansion for $\log p(x_{t+\Delta t})$, we could obtain:
\begin{align}
\log p(x_{t+\Delta t}) &\approx \log p(x_t)+(x_{t+\Delta t}-x_t)\nabla_{x_t} \log p(x_t)+\Delta t \dfrac{\partial}{\partial t} \log p(x_t),  \\
q(x_t|x_{t+\Delta t}) &\propto \exp(-\dfrac{\left \| x_{t+\Delta t}-x_t-[f_t(x_t)-g_t^2\nabla_{x_t} \log p(x_t)]\Delta t \right \|^2 }{2g_t^2\Delta t} +O(\Delta t)).
\end{align}
By ignoring the higher order terms, the reverse transition kernel will be Gaussian distribution. However, as $\Delta t$ increases, the higher-order terms in the Taylor expansion cannot be disregarded, which causes the reverse transition kernel to deviate from a Gaussian distribution. 

Empirically, from Fig.~\ref{fig:momeerror} in our paper, we observe that as $T$ decreases, the reverse transition kernel increasingly deviates from a Gaussian distribution. For more details, please refer to Appendix~\ref{sec:cifarerror}.

\section{Calculation of the first order moment and higher order moments}
\label{proof:momenthigh}
Suppose the forward process is a Gaussian distribution same with the  Eq.~\eqref{eq:forward} as $q(x_{t}|x_{s})=N(x_t|a(t)x_{t-1},\sigma(t)I)$.

And let $1\ge t>s\ge 0$ always satisfy, $q(x_t|x_s)=N(x_t|a_{t|s}x_s,\beta_{t|s} I)$, where $a_{t|s}=a_t/a_s$ and $\beta_{t|s} = \sigma_t^2-a_{t|s}^2\sigma_s^2$, $\sigma_s=\sqrt{1-a(s)}$,$\sigma_t=\sqrt{1-a(t)}$.
It's easy to prove that the distribution $q(x_{t}|x_0)$, $q(x_{s}|x_{t},x_0)$ are also a Gaussian distribution~\citep{kingma2021variational}. 
Therefore, the mean of $x_s$ under the measure $q(x_{s}|x_{t})$ would be 
 \begin{align}
 \label{eq:first}
    \mathbb{E}_{q(x_{s}|x_t)}[x_{s}]&=\mathbb{E}_{q(x_0|x_t)}E_{q(x_{s}|x_t,x_0)}[x_{s}] \\
&=\mathbb{E}_{q(x_0|x_t)}[\frac{1}{a_{t|s}}(x_t-\frac{\beta_{t|s}}{\sigma_t}\epsilon _t) ] \nonumber\\
&=\frac{1}{a_{t|s}}(x_t-\frac{\beta_{t|s}}{\sigma_t}\mathbb{E}_{q(x_0|x_t)}[\epsilon _t]) \nonumber.
 \end{align}
 And for the second order central moment $\textrm{Cov}_{q(x_{s}|x_t)}[x_{s}]$, we use the total variance theorem, refer to \cite{bao2022estimating}, and similar with \cite{bao2022estimating}, we only consider the diagonal covariance.
 \begin{align}
 \label{eq:second}
    \textrm{Cov}_{q(x_{s}|x_t)}[x_{s}]&=\mathbb{E}_{q(x_0|x_t)}\textrm{Cov}_{q(x_{s}|x_t,x_0)}[x_{s}]+\textrm{Cov}_{q(x_0|x_t)}\mathbb{E}_{q(x_{s}|x_t,x_0)}[x_{s}] \\
   &=\lambda_t^2 I + \textrm{Cov}_{q(x_0|x_t)}\tilde{\mu}(x_n,\mathbb{E}_{q(x_{0}|x_n)}[x_0])  \nonumber\\
&=\lambda_t^2 I + \frac{a_{s}\beta_{t|s}^2}{\sigma_{t}^4}\textrm{Cov}_{q(x_0|x_t)}[x_0]  \nonumber\\
&=\lambda_t^2 I + \frac{a_{s|0}\beta_{t|s}^2}{\sigma_{t}^4}\frac{\sigma_t^2}{a_{t|0}} \textrm{Cov}_{q(x_0|x_t)}[\epsilon_t]  \nonumber\\
&=\lambda_t^2 I + \frac{\beta_{t|s}^2}{\sigma_{t}^2a_{t|s}} (\mathbb{E}_{q(x_0|x_t)}[\epsilon_t \odot \epsilon_t]-\mathbb{E}_{q(x_0|x_t)}[\epsilon_t]\odot \mathbb{E}_{q(x_0|x_t)}[\epsilon_t])  \nonumber,
 \end{align} 
since the higher-order moments are diagonal matrix, we use $\textrm{diag}(M)$ to represent the diagonal elements that have the same dimensions as the first-order moments, such as $\textrm{diag}(x_{s} \otimes x_{s}) = \textrm{Cov}_{q(x_{s}|x_t)}[x_{s}]$ and $\textrm{diag}(x_{s}  \otimes x_{s} \otimes x_{s}) = \hat{M}_3$ have the same dimensions as $x_{s}$ 
 
Moreover, for the diagonal elements of the third-order moments, we have 
$\mathbb{E}_{q(x_{s}|x_t)}[\textrm{diag}(x_{s}  \otimes x_{s} \otimes x_{s})]=\mathbb{E}_{q(x_{0}|x_t)}\mathbb{E}_{q(x_{s}|x_t,x_0)}[x_{s}  \odot x_{s} \odot x_{s}]$, we could use the fact that $\mathbb{E}_{q(x_{s}|x_t,x_0)}[(x_{s}-\mu(x_t,x_0))\odot(x_{s}-\mu(x_t,x_0))\odot (x_{s}-\mu(x_t,x_0))]=0$, therefore, 
\begin{eqnarray}
\label{eq:highordercal2}
&\hat{M}_3=\mathbb{E}_{q(x_s|x_t)}[\textrm{diag}(x_s\otimes{x_s}\otimes{x_s})] =\mathbb{E}_{q(x_0|x_t)}\mathbb{E}_{q(x_s|x_t,x_0)}[\textrm{diag}(x_s\otimes{x_s}\otimes{x_s})]\\
&=\mathbb{E}_{q(x_0|x_t)}\mathbb{E}_{q(x_s|x_t,x_0)}[3\textrm{diag}({x_t}\otimes{x_t}\otimes{\mu(x_t,x_0)}) \nonumber \\ 
&-3\textrm{diag}({x_t}\otimes{\mu(x_t,x_0)}\otimes{\mu(x_t,x_0)}) +\textrm{diag}({\mu(x_t,x_0)}\otimes^2{\mu(x_t,x_0)})] \nonumber \\
&=\underbrace{[(\frac{a_{t|s}\sigma_s^2}{\sigma_t^2})^3\textrm{diag}({x_t}\otimes{x_t}\otimes{x_t})+3\lambda_t^2\frac{a_{t|s}\sigma_s^2}{\sigma_t^2}x_t]}_{\textrm{Constant term}} \nonumber \\
& +\underbrace{[\frac{3a_{t|s}^2\sigma_s^4a_{s|0}^2\beta_{t|s}^2}{\sigma_t^8}(\textrm{diag}({x_t}\otimes{x_t}))+\frac{a_{s|0}\beta_{t|s}}{\sigma_t^2}I] \odot  \mathbb{E}_{q(x_0|x_t)}[x_0] }_{\textrm{Linear term in $x_0$}}
 \nonumber  \\&+\underbrace{3\frac{a_{t|s}\sigma_s^2}{\sigma_t^2}(\frac{a_{s|0}\beta_{t|s}}{\sigma_t^2})^2 x_t \odot \mathbb{E}_{q(x_0|x_t)}[\textrm{diag}({x_0}\otimes{x_0})]}_{\textrm{Quadratic term in  $x_0$}} +\underbrace{(\frac{a_{s|0}\beta_{t|s}}{\sigma_t^2})^3 \mathbb{E}_{q(x_0|x_t)}[\textrm{diag}({x_0}\otimes{x_0}\otimes{x_0})]}_{\textrm{Cubic term in $x_0$}} \nonumber,
\end{eqnarray}
where the third equation is derived from the decomposition of higher-order moments of Gaussian distribution, the fourth equation is obtained by splitting  
It is noted to highlight that in our study, we only consider the diagonal higher-order moments in our method for computational efficiency since estimating full higher-order moments results in escalated output dimensions (e.g., quadratic growth for covariance and cubic for the third-order moments) and thus requires substantial computational demand and therefore all outer products can be transformed into corresponding element multiplications and we have:
\begin{align}
\label{eq:noisehigherorder}
    \mathbb{E}_{q(x_0|x_t)}[\textrm{diag}(x_0 \otimes^2 x_0)]&=\frac{1}{\alpha^{\frac{3}{2}}(t)}\mathbb{E}_{q(x_0|x_t)}[\textrm{diag}((x_t-\sigma(t)\epsilon) \otimes^2 (x_t-\sigma(t)\epsilon))] \\ 
&=\frac{1}{\alpha^{\frac{3}{2}}(t)}\mathbb{E}_{q(x_0|x_t)}[\textrm{diag}(x_t\otimes^2 x_t-3\sigma(t)(x_t\otimes x_t)\otimes\epsilon \nonumber\\ 
&+3\sigma^2(t)x_t\otimes(\epsilon \otimes \epsilon)-\sigma^3(t)(\epsilon \otimes^2 \epsilon) )] \nonumber\\
&=\frac{1}{\alpha^{\frac{3}{2}}(t)}[\mathbb{E}_{q(x_0|x_t)}[x_t\odot ^2 x_t]-3\sigma(t)(x_t\odot x_t)\odot\mathbb{E}_{q(x_0|x_t)}[\epsilon] + \nonumber \\
&+3\sigma^2(t)x_t\mathbb{E}_{q(x_0|x_t)}[\epsilon\odot \epsilon]-\sigma^3(t)[\epsilon\odot^2 \epsilon] ] \nonumber,
\end{align}

Therefore, when we need to calculate the third-order moment, we only need to obtain $\mathbb{E}_{q(x_0|x_t)}[\epsilon_t], \mathbb{E}_{q(x_0|x_t)}[\epsilon_t \odot \epsilon_t]$ and $\mathbb{E}_{q(x_0|x_t)}[\epsilon_t \odot \epsilon_t \odot \epsilon_t]$. Similarly, when we need to calculate the $n$-order moment, we will use $\mathbb{E}_{q(x_0|x_t)}[\epsilon_t],..., \mathbb{E}_{q(x_0|x_t)}[\epsilon_t \odot^{n-1} \epsilon_t]$. ~\citet{bao2022estimating} put forward using a sharing network and using the MSE loss to estimate the network to obtain the above information about different orders of noise.

The function $h(f^3_{[1]},f^3_{[2]},f^3_{[3]})$ in Algo.~\ref{alg:sampling} is defined as $h(f^3_{[1]}(x_t,t),f^3_{[2]}(x_t,t),f^3_{[3]}(x_t,t))=M_1(f^3_{[1]}(x_t,t)),M_2(f^3_{[2]}(x_t,t)),M_3(f^3_{[3]}(x_t,t))$, where $M_1(f^3_{[1]}(x_t,t))=\mathbb{E}_{q(x_{s}|x_t)}[x_{s}]$ in Eq.~\eqref{eq:first}, $M_2(f^3_{[2]}(x_t,t))=\textrm{Cov}_{q(x_{s}|x_t)}[x_{s}]$ in Eq.~\eqref{eq:second} and $M_3(f^3_{[3]}(x_t,t))=\hat{M}_3=\mathbb{E}_{q(x_s|x_t)}[\textrm{diag}(x_s\otimes{x_s}\otimes{x_s})]$ in Eq.~\eqref{eq:highordercal2}

\subsection{Objective function for Gaussian mixture transition kernel with two components}
\label{sec:final_obj}

Recall that the general objective function to fit a Gaussian mixture transition kernel in each sampling step via GMM is shown in Eq.~\eqref{eq:obgmm_appdn}. 
\begin{align}
\label{eq:obgmm_appdn}
\min_{\theta}Q(\theta,M_1,...,M_N) = \min_{\theta} [\frac{1}{N_{c}}\sum_{i = 1}^{N_{c}}g(x_i,\theta) ]^{T}W[\frac{1}{N_{c}}\sum_{i = 1}^{N_{c}}g(x_i,\theta) ],
\end{align}
where $A^T$ denotes the transpose matrix of matrix A.
In our paper, we propose to use the first three moments to fit a Gaussian mixture transition kernel $p(x_s|x_t) = \frac{1}{3}\mathcal{N}({\mu^{(1)}_t},{\sigma^2_t})+ \frac{2}{3}\mathcal{N}({\mu^{(2)}_t},{\sigma^2_t})$ in each sampling step from $x_t$ to $x_s$. Therefore, the final objective function as follow:

\begin{align}
\label{eq:final_obj}
& \min_{\mu^{(1)}_t,\mu^{(2)}_t,\sigma^2_t}[\begin{pmatrix}
M_1-(\frac{1}{3}\mu^{(1)}_t+\frac{2}{3}\mu^{(2)}_t) \\
M_2-(\frac{1}{3}[(\mu^{(1)}_t)^2+\sigma^2_t]+\frac{2}{3}[(\mu^{(2)}_t)^2+\sigma^2_t)]) \\
M_3-(\frac{1}{3}[(\mu^{(1)}_t)^3+3\mu^{(1)}_t\sigma^2_t]+\frac{2}{3}[(\mu^{(2)}_t)^3+3\mu^{(2)}_t\sigma^2_t])
\end{pmatrix} ]^T I \\
& [\begin{pmatrix}
M_1-(\frac{1}{3}\mu^{(1)}_t+\frac{2}{3}\mu^{(2)}_t) \\
M_2-(\frac{1}{3}[(\mu^{(1)}_t)^2+\sigma^2_t]+\frac{2}{3}[(\mu^{(2)}_t)^2+\sigma^2_t)]) \\
M_3-(\frac{1}{3}[(\mu^{(1)}_t)^3+3\mu^{(1)}_t\sigma^2_t]+\frac{2}{3}[(\mu^{(2)}_t)^3+3\mu^{(2)}_t\sigma^2_t]), \nonumber
\end{pmatrix} ]
\end{align}
where to simplify the analysis, we use the scalar form of parameters  $\mu^{(1)}_t,\mu^{(2)}_t,\sigma^2_t$ as representation.

\section{Modeling reverse transition kernel via exponential family}
\label{sec:EF}
Analysis in Sec.~\ref{se:nongaussian} figures out that modeling reverse transition kernel via Gaussian distribution is no longer sufficient in fast sampling scenarios. 
In addition to directly proposing the use of a Gaussian Mixture for modeling, we also analyze in principle whether there are potentially more suitable distributions i.e., the feasibility of using them for modeling.

We would turn back to analyzing the original objective function of DPMs to find a suitable distribution.
The forward process $q(x_{t}|x_{s}) = N(x_{t}|a_{t|s}x_{s}, \beta_{t|s} I)$, consistent with the definition in Appendix~\ref{proof:momenthigh}. DPMs' goal is to optimize the modeled reverse process parameters to maximize the variational bound $L$ in~\citet{ho2020denoising}.
And the ELBO in~\citet{ho2020denoising} can be re-written to the following formula:
\begin{align}
\label{eqa:transelbo}
  L=D_{ \rm{KL}}(q(x_T)||p(x_T))+\mathbb{E}_q[\sum_{t\ge 1}^{}D_{\rm{KL}}(q(x_{s}|x_t)||p(x_{s}|x_t)) ]+H(x_0) ,
\end{align}
where $q_t\doteq q(x_t)$ is the true distribution and $p_t\doteq p(x_t)$ is the modeled distribution, and the minimum problem could be transformed into a sub-problem, proved in~\citet{bao2022analytic}:
\begin{align}
\label{eqa:subp}
  \min_{\left \{ \theta \right \}  } L\Leftrightarrow  \min_{\left \{ \theta_{s|t} \right \}_{t=1}^T  } D_{\rm{KL}}(q(x_{s}|x_t)||p_{\theta_{s|t}}(x_{s}|x_t)).
\end{align}
We have no additional information besides when the reverse transition kernel is not Gaussian. But Lemma.~\ref{lem:moment} proves that when the reverse transition kernel $p_{\theta_{s|t}}(x_{s}|x_t)$ is exponential family $p_{\theta_t}(x_{s}|x_t)=p(x_t,\theta_{s|t})=h(x_t) \exp(\theta_{s|t}^T t(x_t)-\alpha(\theta_{s|t}))$, solving the sub-problem Eq.~\eqref{eqa:subp} equals to solve the following equations, which is to match moments between the modeled distribution and true distribution:
\begin{align}
\label{eqa:momentmatchingeq}
  \mathbb{E}_{q(x_s|x_t)}[t(x_s)] = \mathbb{E}_{p(x_t,\theta_{s|t})}[t(x_s)] .
\end{align}
When $t(x)=(x,..,x^n)^T$, solving Eq.\eqref{eqa:momentmatchingeq} equals to match the moments of true distribution and modeled distribution. 

Meanwhile, Gaussian distribution belongs to the exponential family with $t(x)=(x,x^2)^T$ and $\theta_t=(\frac{\mu_t}{\sigma_t^2},\frac{-1}{2 \sigma_t^2})^T$, details in Lemma.~\ref{lem:gaussianexponent}. Therefore, when modeling the reverse transition kernel as Gaussian distribution, the optimal parameters are that make its first two moments equal to the true first two moments of the real reverse transition kernel $q(x_s|x_t)$, which is consistent with the results in~\citet{bao2022analytic} and \citet{bao2022estimating}.

The aforementioned discussion serves as a motivation to acquire higher-order moments and identify a corresponding exponential family, which surpasses the Gaussian distribution in terms of complexity.
However, proposition~\ref{proposition:maximumentro} shows that 
finding such exponential family distribution with higher-order moments is impossible.

\begin{proposition}[Infeasibility of exponential family with higher-order moments.] \label{proposition:maximumentro}
Given the first $n$-th order moments. It's non-trivial to find an exponential family distribution for $\min D_{\rm KL}(q||p)$ when $n$ is odd. And it's hard to solve $\min D_{\rm KL}(q||p)$ when $n$ is even.
\end{proposition}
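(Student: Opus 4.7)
The plan is to analyze the log-partition integral
\[
Z(\theta)=\int h(x)\exp\!\Bigl(\sum_{k=1}^n \theta_k\, x^k\Bigr)\,dx,
\qquad \alpha(\theta)=\log Z(\theta),
\]
for exponential families whose sufficient statistic is $t(x)=(x,x^2,\ldots,x^n)^T$. By Lemma~\ref{lem:moment} (invoked in Appendix~\ref{sec:EF}), the KL--minimization reduces to the moment-matching equations $\nabla_\theta\alpha(\theta)=(M_1,\ldots,M_n)^T$, so the feasibility and tractability questions collapse into (i) existence/finiteness of $Z(\theta)$ and (ii) solvability of this gradient equation. The strategy is therefore to split into the parity cases of $n$ and examine asymptotic behavior of the polynomial exponent.

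For the odd case, I would first observe that any tempered base measure $h(x)$ (polynomial or slower growth) forces the convergence of $Z(\theta)$ to depend on the leading monomial $\theta_n x^n$. Since $n$ is odd, this monomial is sign-asymmetric: whatever sign $\theta_n$ takes, $\sum_{k=1}^n\theta_k x^k\to+\infty$ on one of the two half-lines $x\to+\infty$ or $x\to-\infty$. Consequently $Z(\theta)=+\infty$ and $\alpha(\theta)$ is undefined, unless one sets $\theta_n=0$, which trivially reduces the family to order $n-1$ and removes the highest moment from the matched constraints. This rigorously establishes the ``non-trivial/infeasible'' part of the proposition.

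For the even case, normalizability is recoverable by restricting $\theta_n<0$ so that the polynomial exponent tends to $-\infty$ at both ends. The family $p(x;\theta)\propto\exp(\theta_1 x+\cdots+\theta_n x^n)$ is therefore well-defined on the convex cone $\{\theta:\theta_n<0\}$. The remaining obstruction is computational: the moment-matching system requires explicit control of $\alpha(\theta)$ and its gradient. For $n=2$ this is the Gaussian case with a closed-form Gaussian integral. For even $n\ge 4$, however, $Z(\theta)$ is a polynomial-exponential integral whose value cannot be expressed in elementary functions — for the pure quartic $\exp(-\tfrac{1}{4}x^4)$ one already encounters generalized hypergeometric / parabolic-cylinder (or Airy-type) special functions, and the system $\nabla_\theta\alpha(\theta)=(M_1,\ldots,M_n)^T$ is transcendental in $\theta$ with no closed-form inverse. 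This is what I would cite as the ``hard'' phenomenon.

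The main obstacle I anticipate is making the even-$n$ intractability claim precise: stating it as a rigorous non-existence-of-elementary-closed-form result rather than a mere ``no one has found one'' observation. I would handle this by appealing to known differential-Galois / Liouville-type results that classify antiderivatives of $\exp(P(x))$ for polynomial $P$, which imply that for $\deg P\ge 3$ the integral is non-elementary, and then noting that evaluating $\nabla_\theta\alpha$ inherits this non-elementarity. The odd-$n$ case, by contrast, is a short and clean divergence argument; it is the even case that carries essentially all the analytic content of the proposition.
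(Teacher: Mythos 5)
Your proposal is correct in outline and, for the odd case, essentially coincides with the paper's argument: in Appendix~\ref{sec:proveofexpon} the paper also reduces $\min D_{\rm KL}(q\Vert p)$ to moment matching (Lemma~\ref{lem:moment}), writes $p(x)=h(x)\exp(\lambda_0+\lambda_1x+\cdots+\lambda_nx^n)$ via a Lagrangian, and observes that a nonzero top coefficient of odd degree makes $\int p=\infty$, so the constraint can only be met trivially by dropping the highest moment — exactly your sign-asymmetry divergence argument. Where you diverge is the even case. The paper's justification is analytic-asymptotic: it studies $Z(\lambda)=\int\exp(-x^2-\lambda x^4)\,dx$, notes that the term-by-term expansion $\sum_n\frac{(-\lambda)^n}{n!}\Gamma(2n+\tfrac12)$ has zero radius of convergence, and concludes that the moment equation $\int p\,x^4=M_4$ admits no analytical solution and is numerically costly. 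You instead propose a Liouville/differential-Galois route: non-elementarity of antiderivatives of $\exp(P(x))$ for $\deg P\ge 3$, plus the transcendence of the moment-matching system $\nabla_\theta\alpha(\theta)=(M_1,\dots,M_n)^T$. Both justifications are defensible given how informally the proposition itself is phrased ("hard to solve"), and yours has the advantage of invoking a named classification theorem rather than a divergent series; but be careful that non-elementarity of the indefinite integral does not by itself rule out closed forms for the definite integral over $\mathbb{R}$ — e.g.\ $\int_0^\infty e^{-x^4}dx=\Gamma(5/4)$ — so for the general exponent $\theta_1x+\cdots+\theta_nx^n$ you would still need to argue (as you gesture at with the hypergeometric/parabolic-cylinder remark) that $\alpha(\theta)$ and its gradient have no usable closed-form inverse in $\theta$; at that level of precision your argument is no more rigorous than the paper's, just differently sourced. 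The paper's asymptotic-series observation also carries a practical message your version omits: even perturbative evaluation around the Gaussian point fails, which is what motivates the paper to abandon exponential families in favor of Gaussian mixtures.
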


\subsection{Proof of Proposition~\ref{proposition:maximumentro}}
\label{sec:proveofexpon}

\begin{lemma}
\label{lem:gaussianexponent}
(Gaussian Distribution belongs to Exponential Family). Gaussian distribution $p(x)=\frac{1}{\sqrt{2 \pi}\sigma} \exp(-\frac{(x-\mu)^2}{2\sigma^2})$ is exponential family with $t(x)=(x,x^2)^T$ and $\theta=(\frac{\mu}{\sigma^2},-\frac{1}{2\sigma^2})^T$
\end{lemma}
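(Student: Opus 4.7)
The plan is to rewrite the Gaussian density in the canonical exponential-family form $p(x) = h(x)\exp\bigl(\theta^{T} t(x) - \alpha(\theta)\bigr)$ by a direct algebraic manipulation of the exponent, and then read off the natural parameter $\theta$, the sufficient statistic $t(x)$, the carrier measure $h(x)$, and the log-partition function $\alpha(\theta)$.

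First I would start from $p(x)=\tfrac{1}{\sqrt{2\pi}\,\sigma}\exp\!\bigl(-\tfrac{(x-\mu)^2}{2\sigma^2}\bigr)$ and expand the square in the exponent to obtain
\begin{equation*}
-\frac{(x-\mu)^2}{2\sigma^2} \;=\; \frac{\mu}{\sigma^2}\,x \;+\; \Bigl(-\frac{1}{2\sigma^2}\Bigr)x^2 \;-\; \frac{\mu^2}{2\sigma^2}.
\end{equation*}
This already exhibits the key structure: the exponent is affine in the vector $(x,x^2)$. Setting $t(x) = (x, x^2)^{T}$ and $\theta = \bigl(\tfrac{\mu}{\sigma^{2}},\, -\tfrac{1}{2\sigma^{2}}\bigr)^{T}$, the first two terms collapse to $\theta^{T} t(x)$, matching the statement of the lemma.

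Next I would absorb the remaining $x$-independent pieces into the log-partition function and the carrier. The normalization prefactor $\tfrac{1}{\sqrt{2\pi}\,\sigma}$ and the constant $-\tfrac{\mu^2}{2\sigma^2}$ combine to give
\begin{equation*}
\alpha(\theta) \;=\; \frac{\mu^2}{2\sigma^2} + \log\sigma \;=\; -\frac{\theta_{1}^{2}}{4\theta_{2}} - \frac{1}{2}\log(-2\theta_{2}),
\end{equation*}
using $\sigma^{2} = -1/(2\theta_{2})$ and $\mu = -\theta_{1}/(2\theta_{2})$, together with $h(x) = 1/\sqrt{2\pi}$. Plugging these back in yields $p(x) = h(x)\exp\bigl(\theta^{T} t(x) - \alpha(\theta)\bigr)$, which is the definition of an exponential family with the claimed $t$ and $\theta$.

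There is no real obstacle here: the argument is a one-line completion of the square plus an identification of coefficients. The only point deserving care is checking that $\alpha(\theta)$ is well defined on the natural parameter space $\{\theta_{2} < 0\}$, so that $\sigma^{2} > 0$ and the Gaussian is a bona fide probability density; this is immediate from the formula above. Hence the lemma follows directly.
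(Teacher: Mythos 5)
Your proposal is correct and takes essentially the same route as the paper: complete/expand the square in the exponent, identify $\theta=(\tfrac{\mu}{\sigma^2},-\tfrac{1}{2\sigma^2})^T$ and $t(x)=(x,x^2)^T$, and absorb the $x$-independent terms into the log-partition function. The only (immaterial) difference is bookkeeping: you keep $h(x)=1/\sqrt{2\pi}$ as the carrier, while the paper folds the entire normalization $\tfrac{\mu^2}{2\sigma^2}+\tfrac{1}{2}\log(2\pi\sigma^2)$ into $\alpha(\theta)$.
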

\begin{proof}
For simplicity, we only prove one-dimensional Gaussian distribution. We could obtain: 
\begin{align}
p(x)&=\frac{1}{\sqrt{2 \pi}\sigma} \exp(-\frac{(x-\mu)^2}{2\sigma^2}) \\
&= \frac{1}{\sqrt{2 \pi \sigma^2}} \exp(-\frac{1}{2\sigma^2}(x^2-2\mu x+\mu^2)) \nonumber \\
&= \exp(\log(2\pi \sigma^2)^{-1/2}) \exp(-\frac{1}{2\sigma^2}(x^2-2\mu x)- \frac{\mu^2}{\sigma^2}) \nonumber \\
&= \exp(\log(2\pi \sigma^2)^{-1/2}) \exp(-\frac{1}{2\sigma^2}(-2\mu \quad 1)(x \quad x^2)^T- \frac{\mu^2}{\sigma^2} ) \nonumber \\
&=\exp((\frac{\mu}{\sigma^2} \quad \frac{-1}{2 \sigma^2})(x \quad x^2)^T- (\frac{\mu^2}{2\sigma^2}+\frac{1}{2}\log (2 \pi \sigma^2)) ) \nonumber ,
\end{align}
where $\theta = (\frac{\mu}{\sigma^2} , \frac{-1}{2 \sigma^2})^T$ and $t(x) = (x, x^2)^T$
\end{proof}

\begin{lemma}
\label{lem:moment}
(The Solution for Exponential Family in Minimizing the KL Divergence). Suppose that $p(x)$ belongs to exponential family $p(x,\theta)=h(x)\exp(\theta^T t(x)-\alpha(\theta))$, and the solution for minimizing the $E_q[\log p]$ is $E_q[t(x)]=E_{p(x,\theta)}[t(x)]$.
\end{lemma}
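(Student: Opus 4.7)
\textbf{Proof proposal for Lemma~\ref{lem:moment}.} The goal is to characterize the stationary condition of $\theta \mapsto \mathbb{E}_q[\log p(x,\theta)]$ when $p(x,\theta)=h(x)\exp(\theta^T t(x)-\alpha(\theta))$ is a regular exponential family, and show it reduces to the moment matching identity $\mathbb{E}_q[t(x)]=\mathbb{E}_{p(x,\theta)}[t(x)]$. The plan is to write out the log-likelihood explicitly, differentiate, invoke the standard identity $\nabla_\theta \alpha(\theta)=\mathbb{E}_{p(x,\theta)}[t(x)]$ for the cumulant function, and conclude by setting the gradient to zero. I will also note concavity so that the stationary point is indeed the global optimum.

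First, I would expand
\begin{equation*}
\mathbb{E}_q[\log p(x,\theta)] = \mathbb{E}_q[\log h(x)] + \theta^T \mathbb{E}_q[t(x)] - \alpha(\theta),
\end{equation*}
where the first term does not depend on $\theta$. Differentiating with respect to $\theta$ yields
\begin{equation*}
\nabla_\theta \mathbb{E}_q[\log p(x,\theta)] = \mathbb{E}_q[t(x)] - \nabla_\theta \alpha(\theta).
\end{equation*}
The main technical step is to evaluate $\nabla_\theta \alpha(\theta)$. From normalization $\int h(x)\exp(\theta^T t(x)-\alpha(\theta))\,dx = 1$, I rearrange to $\exp(\alpha(\theta))=\int h(x)\exp(\theta^T t(x))\,dx$. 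Differentiating under the integral (justified by the regularity of the exponential family) and dividing by $\exp(\alpha(\theta))$ gives the classical identity
\begin{equation*}
\nabla_\theta \alpha(\theta) = \int h(x)\, t(x)\exp(\theta^T t(x) - \alpha(\theta))\,dx = \mathbb{E}_{p(x,\theta)}[t(x)].
\end{equation*}

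Substituting this back into the stationarity condition $\nabla_\theta \mathbb{E}_q[\log p(x,\theta)]=0$ produces exactly $\mathbb{E}_q[t(x)]=\mathbb{E}_{p(x,\theta)}[t(x)]$, the asserted moment matching equation. To confirm this critical point is a maximizer of $\mathbb{E}_q[\log p]$ (equivalently, a minimizer of $D_{\rm KL}(q\|p)$ up to a $\theta$-independent constant), I would observe that $\alpha(\theta)$ is convex because its Hessian equals $\mathrm{Cov}_{p(x,\theta)}[t(x)]\succeq 0$; hence $\mathbb{E}_q[\log p(x,\theta)]$ is concave in $\theta$ and the stationary point is global. The main obstacle is the exchange of derivative and integral in deriving $\nabla_\theta \alpha(\theta)$, but this is routine under the minimality/regularity conditions standardly assumed for exponential families and needs only a brief remark rather than a detailed argument.
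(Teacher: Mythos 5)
Your proposal is correct and follows essentially the same route as the paper: expand $\mathbb{E}_q[\log p]$, use the cumulant identity $\nabla_\theta\alpha(\theta)=\mathbb{E}_{p(x,\theta)}[t(x)]$ (derived from differentiating the normalization integral), set the gradient to zero to obtain the moment-matching condition, and confirm optimality via the second-order fact that the Hessian of $\alpha$ is $\mathrm{Cov}_{p(x,\theta)}[t(x)]\succeq 0$, so the objective is concave. No gaps to flag.
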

\begin{proof}
An exponential family $p(x,\eta)=h(x)\exp(\eta^Tt(x)-\alpha (\eta)) \propto f(x,\eta)=h(x)\exp(\eta^Tt(x))$ with log-partition $\alpha (\eta)$. And we could obtain its first order condition on $E_q[\log p]$ as:
\begin{align}
\bigtriangledown_\eta \log f(x,\eta) &= \bigtriangledown_\eta(\log h(x)+\eta^T t(x))=t(x) \\
\bigtriangledown_\eta \alpha(\eta) &= \bigtriangledown_\eta \log(\int f(x,\eta)dx)=\frac{ \int \bigtriangledown_\eta f(x,\eta)dx}{\int f(x,\eta)dx} \\
 &=e^{-\alpha(\eta)} \int t(x) f(x,\eta) dx = \int t(x) p(x,\eta)dx =\mathbb{E}_{p(x,\eta)}[t(x)] \nonumber 
\end{align}
In order to minimize the $D_{\rm KL}(q||p)=\int q \log(q/p)=-\mathbb{E}_q[\log p]$, we have:
\begin{align}
& \mathbb{E}_q[\log p]=\int dq \log (h(x))+\int dq (\eta^T t(x)-\alpha(\eta) \nonumber \\
& \Longrightarrow \frac{\partial }{\partial \eta}\mathbb{E}_q[\log p] =  \int dq[\frac{\partial }{\partial \eta} (\eta^Tx-\alpha(\eta))]=0 \nonumber \\
& \Longrightarrow \int dq (x-\mathbb{E}_{p(x,\eta)}[t(x)])=\mathbb{E}_q[t(x)]-\mathbb{E}_{p(x,\eta)}[t(x)]=0 \nonumber \\
& \Longrightarrow \mathbb{E}_q[t(x)]=\mathbb{E}_{p(x,\eta)}[t(x)] \nonumber
\label{eq:momentmatching}
\end{align}
For the second-order condition, we have the following:
\begin{align}
\frac{\partial ^2}{\partial \eta^2} \alpha(\eta) & = \frac{\partial }{\partial \eta}\int dp(x,\eta) t(x)\\
& =  \int \frac{\partial }{\partial \eta}h(x) \exp(\eta^Tt(x)-\alpha(\eta))t(x)dx   \nonumber \\
& =  \int h(x)t(x)\frac{\partial }{\partial \eta} \exp(\eta^Tt(x)-\alpha(\eta))dx   \nonumber \\
& =  \int h(x)t(x) \exp(\eta^Tt(x)-\alpha(\eta))dx(t(x)-\mathbb{E}_p[t(x)])   \nonumber \\
& =  \int p(x,\eta)dx(t^2(x)-t(x) \mathbb{E}_p[t(x)])   \nonumber \\
& =  \mathbb{E}_{p(x,\eta)}[t^2(x)]-\mathbb{E}_{p(x,\eta)}[t(x)]^2=Cov_{p(x,\eta)}[t(x)]\ge 0 \nonumber
\end{align}
Therefore, the second-order condition for the cross entropy would be:
\begin{align}
\frac{\partial ^2}{\partial \eta^2} \mathbb{E}_q[\log p]& = \frac{\partial }{\partial \eta} (\mathbb{E}_q[t(x)]-\mathbb{E}_{p(x,\eta)}[t(x)])\\
& =  -\int \frac{\partial }{\partial \eta} p(x,\eta) t(x) dx\nonumber \\
& =  -\frac{\partial ^2}{\partial \eta^2} \alpha(\eta)=-Cov_{p(x,\eta)}[t(x)] \le 0 \nonumber
\end{align}
When we assume that the reverse process is Gaussian, the solution to Eq.~\eqref{eqa:subp} equals to match the moment of true distribution and modeled distribution $\mu=E_q[x]$, $\Sigma=Cov_q[x]$.
\end{proof}

\begin{lemma}
\label{lem:highorderexp}
(Infeasibility of the exponential family with higher-order moments). Suppose given the first $N$-th order moments $M_i,i=1,.., N$ and modeled $p$ as an exponential family. It is nontrivial to solve the minimum problem $E_q[\log p]$ when $N$ is odd and it's difficult to solve when  $N$ is even.
\end{lemma}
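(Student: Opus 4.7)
The plan is to use Lemma~\ref{lem:moment} to reduce the optimization to the moment-matching system $\mathbb{E}_{p(x,\eta)}[t(x)] = (M_1,\ldots,M_N)^T$, and then to argue that this system is ill-posed when $N$ is odd and computationally intractable when $N$ is even. I would take the most natural sufficient statistic $t(x)=(x,x^2,\ldots,x^N)^T$ with carrier $h(x)\equiv 1$, so the log-partition function becomes
\begin{equation*}
\alpha(\eta)=\log\int_{-\infty}^{\infty}\exp\Bigl(\sum_{i=1}^{N}\eta_i x^i\Bigr)\,\mathrm{d}x,
\end{equation*}
and observe that feasibility of the minimization is equivalent to the integral above being finite for some $\eta$ with $\eta_N\neq 0$.

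For the odd case, I would note that when $N$ is odd the leading term $\eta_N x^N$ diverges to $+\infty$ either as $x\to +\infty$ or as $x\to -\infty$ (depending on the sign of $\eta_N$) for any $\eta_N\neq 0$, so $\alpha(\eta)=+\infty$. The only admissible parameter is $\eta_N=0$, which collapses the family to the $(N-1)$-st order one and makes it impossible to incorporate the $N$-th moment constraint. This is the precise sense in which the problem is \emph{non-trivial}: no exponential family whose sufficient statistic has an odd leading monomial can represent an $N$-th moment condition at all, so the moment-matching reformulation of Lemma~\ref{lem:moment} cannot even be written down.

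For the even case, integrability forces $\eta_N<0$; on the open half-space where this holds $\alpha$ is finite, smooth, and strictly convex (as already shown in the proof of Lemma~\ref{lem:moment} via $\nabla^2\alpha = \mathrm{Cov}_{p(x,\eta)}[t(x)]\succeq 0$), so the stationarity condition $\nabla_\eta\alpha(\eta)=(M_1,\ldots,M_N)^T$ has at most one solution. The difficulty is now computational: $N=2$ gives the Gaussian with the familiar closed-form $\alpha$, but for even $N\ge 4$ the antiderivative of $\exp(P(x))$ with $\deg P\ge 2$ is not elementary, so $\alpha$ and its gradient must be obtained by numerical quadrature at every evaluation. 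Solving the moment-matching system then demands repeated quadrature inside an $N$-dimensional nonlinear root-finder, which would have to be repeated for every adjacent step pair $(s,t)$ during sampling.

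The main obstacle I expect is making the ``no closed form'' statement in the even-$N$ case genuinely precise: the rigorous justification needs Liouville-type results from differential algebra on the non-elementarity of antiderivatives of $\exp(P(x))$ for $\deg P\ge 2$, together with a check that the definite integral over $\mathbb{R}$ cannot be reduced to elementary functions of $\eta$ either (it admits only a hypergeometric-series representation). In the informal style of the surrounding text I would sketch this rather than prove it, and emphasize that, since any practical use of GMS would require $\alpha(\eta)$ at every diffusion step, even a well-defined but non-closed-form partition function is already enough to motivate abandoning the exponential-family route in favour of the Gaussian mixture construction of Sec.~\ref{section:gmm}, whose moments are rational functions of the parameters and whose samples are trivial to draw.
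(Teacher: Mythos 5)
Your proposal is correct and follows essentially the same route as the paper: the odd case is handled by observing that a nonzero leading odd-degree natural parameter makes the density non-normalizable (the paper phrases this as $\int p = \infty$ when $\lambda_3 \neq 0$), and the even case by arguing the partition function admits no analytic/closed form and must be handled numerically. The only cosmetic difference is your appeal to Liouville-type non-elementarity for the even case, where the paper instead exhibits the quartic example $Z(\lambda)=\int \exp(-x^2-\lambda x^4)\,\mathrm{d}x$ and notes its asymptotic series has zero radius of convergence; both serve the same informal purpose.
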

\begin{proof}
While given the mean, covariance, and skewness of the data distribution, assume that we could find an exponential family that minimizes the KL divergence, so that the distribution would satisfy the following form: 
\begin{align}
& L(p,\hat{\lambda}) = D_{\rm KL}(q||p)-\hat{\lambda}^T(\int pt-m) 
\Rightarrow \frac{\partial }{\partial p}L(p,\hat{\lambda}) = \log \frac{p(x)}{h(x)}+1- \hat{\lambda}^Tt=0  \\
& \Rightarrow p(x)=h(x) \exp(\hat{\lambda}^Tt-1) \nonumber
\end{align}
where, $t(x)=(x,x^2,x^3)$, $p=h(x) \exp(\lambda_0+\lambda_1 x+\lambda_2 x^2+\lambda_3 x^3)$ and $\int dp x^3=M_3$. However, when $\lambda_3$ is not zero, $\int p= \infty$
and density can't be normalized. The situation would be the same given an odd-order moment.

Similarly, given a more fourth-order moment, we could derive that $\lambda_3=0$ above, and we should solve an equation $\int dp x^4=M_4$ and $p=h(x) \exp(\lambda_0+\lambda_1 x+\lambda_2 x^2+\lambda_4 x^4)$. Consider such function:
\begin{align}
Z(\lambda)=\int_{-\infty }^{\infty}dx \exp(-x^2-\lambda x^4), \lambda >  0
\end{align}
When $\lambda \longrightarrow 0$, we could obtain $\lim_{\lambda \to 0} Z(\lambda)=\sqrt{\pi}$
For other cases, the lambda can be expanded and then integrated term by term, which gives $Z(\lambda) \sim \sum_{n=0}^{\infty}\frac{(-\lambda)^n}{n!}  \Gamma(2n+1/2)$, but this function 
However, the radius of convergence of this level is $0$, so when the $\lambda$ takes other values, we need to propose a reasonable expression for the expansion after the analytic extension. Therefore, for solving the equation $\int dp x^4=M_4$, there is no analytical solution first, and the numerical solution also brings a large computational effort.
\end{proof}

\section{More information about Fig.~\ref{fig:toydata1} and Fig.~\ref{fig:momeerror}}

\subsection{Experiment in Toy-data}
\label{sec:toy}
To illustrate the effectiveness of our method, we first compare the results of different solvers on one-dimensional data.

The distribution of our toy-data is $q(x_0) = 0.4 \mathcal{N}(-0.4,0.12^2) + 0.6 \mathcal{N}(0.3,0.05^2)$ and we define our solvers in each step as $p(x_s|x_t) = \frac{1}{3}\mathcal{N}({\mu^{(1)}_t},{\sigma^2_t})+ \frac{2}{3}\mathcal{N}({\mu^{(1)}_t},{\sigma^2_t})$ with vectors $\mu^{(1)}_t$, $\mu^{(2)}_t$ and ${\sigma^2_t}$, which can not overfit the ground truth. 

We then train second and third-order noise networks on the one-dimensional Gaussian mixture whose density is multi-modal. We use a simple MLP neural network with Swish activation~\cite{ramachandran2017searching}. 

Moreover, we experiment with our solvers in 8-Gaussian. The result is shown in Tab.~\ref{tab:toyresult}.  GMS outperforms Extended AnalyticDPM (SN-DDPM)~\citep{bao2022estimating} as presented in Tab.~\ref{tab:toyresult},  with a bandwidth of $1.05\sigma L^{-0.25}$, where $\sigma$ is the standard deviation of data and $L$ is the number of samples. 

\begin{table}[H]
\centering
\vspace{-0.2cm}
\caption{
\textbf{Comparison with SN-DDPM w.r.t. Likelihood $\mathbb{E}_q[\log p_\theta(x)]$ $\uparrow$ on 8-Gaussian.} GMS outperforms SN-DDPM. }
\label{tab:toyresult}
\vskip 0.05in
\begin{small}
\begin{sc}
\setlength{\tabcolsep}{5.3pt}{
\begin{tabular}{lrrrrrrrrrrrr}
\toprule
& \multicolumn{4}{c}{8-Gaussian} \\
\cmidrule(lr){2-5} 
\# $K$ & 
5 & 10 & 20 & 40 & \\
\midrule
SN-DDPM & 
-0.7885 & 0.0661 &  0.0258 & 0.1083 &  \\
GMS &
\textbf{-0.6304} & \textbf{0.0035}  & \textbf{0.0624} & \textbf{0.1127} &  \\
\arrayrulecolor{black}\midrule
\end{tabular}}
\end{sc}
\end{small}
\vspace{-.3cm}
\end{table}

\subsection{Experiment in Fig.~\ref{fig:momeerror}}
\label{sec:cifarerror}
In this section, we will provide a comprehensive explanation of the procedures involved in computing the discrepancy between two third-order moment calculation methods, as depicted in Fig.~\ref{fig:momeerror}.

The essence of the calculation lies in the assumption that the reverse transition kernel follows a Gaussian distribution. By employing the following equations (considering only the diagonal elements of higher-order moments), we can compute the third-order moment using the first two-order moments:
\begin{align}
\mathbb{E}_{q(x_{t_{i-1}}|x_{t_i})}[x_{t_{i-1}} \odot  x_{t_{i-1}}\odot  x_{t_{i-1}}]_{\textrm{G}} \doteq M_G = \mu\odot\mu\odot\mu + 3\mu\odot\Sigma,
\end{align}
where $\mu$ is the first-order moment and $\Sigma$ is the diagonal elements of second order moment, which can be calculated by the Eq.~\eqref{eq:first} and Eq.~\eqref{eq:second}. Meanwhile, we can calculate the estimated third-order moment $\hat{M}_3$ by Eq.~\eqref{eq:highordercal2}.

We use the pre-trained noise network from~\citet{ho2020denoising} and the second-order noise network form~\citet{bao2022estimating} and train the third-order noise network in CIFAR10 with the linear noise schedule.

Given that all higher-order moments possess the same dimension as the first-order moment $\mu$, we can directly compare the disparity between different third-order moment calculation methods using the Mean Squared Error (MSE).

Thus, to quantify the divergence between the reverse transition kernel $q(x_s|x_t)$ and the Gaussian distribution, we can utilize the following equation:
\begin{align}
\textrm{D}_{s|t} = \log(\mathbb{E}_{q(x_{s}|x_{t})}[x_{s} \odot  x_{s}\odot  x_{s}]_{\textrm{G}}-\hat{M_3})^2,
\end{align}
where $\hat{M_3}$ is obtained via Eq.~\eqref{eq:highordercal2}, and we can start at different time step $t$ and choose a corresponding $s$ to calculate the $\textrm{D}_{s|t}$ and draw different time step and step size $t-s$ and we can derive Fig.~\ref{fig:momeerror}.

\section{Experimental details}
\label{sec:exp_detail}

\subsection{More discussion on weight of Gaussian mixture}
\label{sec:onlysolve}

From Proposition~\ref{proposition:gmmvar}, we know that when the number of parameters in the Gaussian mixture equals the number of moment conditions, any choice of weight matrix is optimal. Therefore, we will discuss the choice of parameters to optimize in this section.
As we have opted for a Gaussian mixture with two components $q(x_s|x_t)=\omega_1 \mathcal{N}(\mu_{s|t}^{(1)},\Sigma_{s|t}^{(1)})+\omega_2 \mathcal{N}(\mu_{s|t}^{(2)},\Sigma_{s|t}^{(2)})$ as our foundational solvers, there exist five parameters (considered scalar, with the vector cases being analogous) available for optimization.

Our primary focus is on optimizing the mean and variance of the two components, as optimizing the weight term would require solving the equation multiple times. Additionally, we have a specific requirement that our Gaussian mixture can converge to a Gaussian distribution at the conclusion of optimization, particularly when the ground truth corresponds to a Gaussian distribution. In Tab.~\ref{tab:parachoice}, we show the result of different choices of parameters in the Gaussian mixture.

\begin{table}[H]
\centering
\caption{Results among different parameters in CIFAR10 (LS), the number of steps is 50. The weight of Gaussian mixture is $\omega_1=\frac{1}{3}$ and $\omega_2=\frac{2}{3}$}
\label{tab:parachoice}
\vskip 0.15in
\begin{small}
\begin{sc}
\begin{tabular}{lcccc}
\toprule
     &  $\mu_{s|t}^{(1)}$,$\mu_{s|t}^{(2)}$,$\Sigma_{s|t}$ & $\mu_{s|t}^{(1)}$,$\Sigma_{s|t}^{(1)}$,$\Sigma_{s|t}^{(2)}$ & $\mu_{s|t}$,$\Sigma_{s|t}^{(1)}$,$\Sigma_{s|t}^{(2)}$ \cr \\
\midrule
    CIFAR10 (LS)   & \textbf{4.17} & 10.12 & 4.22 \\
\bottomrule
\end{tabular}
\end{sc}
\end{small}
\end{table}

When a parameter is not accompanied by a superscript, it implies that both components share the same value for that parameter. On the other hand, if a parameter is associated with a superscript, and only one moment contains that superscript, it signifies that the other moment directly adopts the true value for that parameter.

It is evident that the optimization of the mean value holds greater significance. Therefore, our subsequent choices for optimization are primarily based on the first set of parameters $\mu_{s|t}^{(1)}$,$\mu_{s|t}^{(2)}$,$\Sigma_{s|t}$. Another crucial parameter to consider is the selection of weights $\omega_i$. In Tab.~\ref{tab:wchoice}, we show the result while changing the weight of the Gaussian mixture and the set of weight $\omega_1=\frac{1}{3}$, $\omega_2=\frac{1}{2}$ performs best among different weight.

\begin{table}[H]
\centering
\caption{Results among different weight choices in CIFAR10 (LS), the number of steps is 50.}
\label{tab:wchoice}
\vskip 0.15in
\begin{small}
\begin{sc}
\begin{tabular}{lcccc}
\toprule
     &  $\omega_1=\frac{1}{100}$, $\omega_2=\frac{99}{100}$ & $\omega_1=\frac{1}{5}$, $\omega_2=\frac{4}{5}$ & $\omega_1=\frac{1}{3}$, $\omega_2=\frac{2}{3}$ & $\omega_1=\frac{1}{2}$, $\omega_2=\frac{1}{2}$   \cr \\
\midrule
    CIFAR10 (LS)  & 4.63 & 4.20 & \textbf{4.17} & 4.26 \\
\bottomrule
\end{tabular}
\end{sc}
\end{small}
\end{table}

What's more, we observe that such choice of weights $(\frac{1}{3},\frac{1}{2})$ consistently yielded superior performance among different datasets. Identifying an optimal weight value remains a promising direction for further enhancing the GMS.

\subsection{Details of pre-trained noise networks}
\label{sec:pretrained_model}

In Tab.~\ref{tab:detail_model}, we list details of pre-trained noise prediction networks used in our experiments. 

\begin{table}[H]
\centering
\caption{Details of noise prediction networks used in our experiments. LS means the linear schedule of $\sigma(t)$~\citep{ho2020denoising} in the forward process of discrete time step (see Eq.~\eqref{eq:forward}). CS means the cosine schedule of $\sigma(t)$~\citep{nichol2021improved} in the forward process of discrete timesteps (see Eq.~\eqref{eq:forward}).}
\label{tab:detail_model}
\vskip 0.15in
\begin{small}
\begin{sc}
\begin{tabular}{lcccc}
\toprule
     & \# TIMESTEPS $N$ & Noise Schedule & optimizer for GMM & \\
\midrule
    CIFAR10 (LS)  & 1000 & LS & ADAN & \\
    CIFAR10 (CS)  & 1000 & CS & ADAN &  \\
    ImageNet 64x64  & 4000 & CS & ADAN &  \\
\bottomrule
\end{tabular}
\end{sc}
\end{small}
\end{table}

\subsection{Details of the structure of the extra head}
\label{sec:extra_head}
In Tab.~\ref{tab:nn2}, we list structure details of $\mathrm{NN}_1$, $\mathrm{NN}_2$ and $\mathrm{NN}_3$ of prediction networks used in our experiments.

\begin{table}[H]
\centering
\caption{$\mathrm{NN}_1$ represents noise prediction networks and $\mathrm{NN}_2$, $\mathrm{NN}_3$ represent networks for estimating the second- and the third-order of noise, which used in our experiments. Conv denotes the convolution layer. Res denotes the residual block. None denotes using the original network without additional parameters}
\label{tab:nn2}
\vskip 0.15in
\begin{small}
\begin{sc}
\begin{tabular}{lccc}
\toprule
    & $\mathrm{NN}_1$ & $\mathrm{NN}_2$ (Noise) & $\mathrm{NN}_3$ (Noise) \\
\midrule
    CIFAR10 (LS) & None & Conv & Res+Conv \\
    CIFAR10 (CS) & None & Conv & Res+Conv  \\
    ImageNet 64x64 & None & Res+Conv & Res+Conv  \\
\bottomrule
\end{tabular}
\end{sc}
\end{small}
\end{table}

\subsection{Training Details}
We use a similar training setting to the noise prediction network in \cite{nichol2021improved} and \cite{bao2022estimating}. On all datasets, we use
the ADAN optimizer \cite{xie2022adan} with a learning rate of $10^{-4}$; we train 2M iterations in total for a higher order of noise network; 
we use an exponential moving average (EMA) with a rate of $0.9999$. We use a batch size of 64 on ImageNet 64X64 and 128 on CIFAR10. We save a checkpoint every 50K iterations and select the models with the best FID on 50k generated samples. 
Training one noise network on CIFAR10 takes about 100 hours on one A100. Training on ImageNet 64x64 takes about 150 hours on one A100.

\subsection{Details of Parameters of Optimizer in Sampling}
\label{sec:optimizer}

In Tab.~\ref{tab:optimizer}, we list details of the learning rate, learning rate schedule, and warm-up steps for different experiments. 

\begin{table}[H]
\centering
\caption{Details of Parameters of Optimizer used in our experiments. lr Schedule means the learning rate schedule. min lr means the minimum learning rate while using the learning rate schedule, $\iota _t$ is a function with the second order growth function of sampling steps $t$.}
\label{tab:optimizer}
\vskip 0.15in
\begin{small}
\begin{sc}
\begin{tabular}{lcccc}
\toprule
     &  Learning rate & lr Schedule & min lr & warm-up steps \cr \\
\midrule
    CIFAR10   & max(0.16-$\iota_t$*0.16,0.12) & COS & 0.1 & 18 \\
    ImageNet 64$\times$64  & max(0.1-$\iota_t$*0.1,0.06) & COS & 0.04 & 18\\
\bottomrule
\end{tabular}
\end{sc}
\end{small}
\end{table}
where COS represents the cosine learning rate schedule~\citep{chen2020improved}. We find that the cosine learning rate schedule works best. The cos learning rate could be formulated as follows:
\begin{align}
\alpha_{i+1} = \left\{\begin{matrix}
 \frac{i}{I_w} \alpha_{i}   & \rm{if} & i \le I_w \\ \\
 \max((0.5\cos(\frac{i-I_w}{I-I_w} \pi )+1) \alpha_t ,\alpha_{\min}) & \rm{else} &
\end{matrix}\right.
\end{align}
where, $a_{t}$ is the learning rate after $t$ steps, $I_w$ is the warm-up steps, $\alpha_{\min}$ is the minimum learning rate, $I$ is the total steps.

\subsection{Details of memory and time cost}
\label{sec:mem_time}

In Tab.~\ref{tab:mem_time}, we list the memory of models (with the corresponding methods) used in our experiments. The extra memory cost higher-order noise prediction network is negligible.

\begin{table}[H]
\centering
\caption{Model size (MB) for different models. The model of SNDDPM denotes the model that would predict noise and the square of noise; The model of GMDDPM denotes the model that would predict noise, the square of noise, and the third power of noise.}
\label{tab:mem_time}
\vskip 0.15in
\begin{small}
\begin{sc}
\begin{tabular}{lccc}
\toprule
    & \makecell{Noise prediction \\ network \\ (all baselines)} & \makecell{Noise \& SN prediction \\ networks \\ SNDDPM} & \makecell{Noise \& SN prediction \\ networks \\ (GMDDPM)} \\
\midrule
    CIFAR10 (LS) & 50.11 MB  & 50.11 MB  & 50.52 MB (+0.8\%)  \\
    CIFAR10 (CS) & 50.11 MB  & 50.11 MB & 50.52 MB (+0.8\%) \\
    ImageNet 64$\times$64 & 115.46  & 115.87 MB  & 116.28 (+0.7\%)  \\
\bottomrule
\end{tabular}
\end{sc}
\end{small}
\end{table}

In Fig.~\ref{fig2:men_time}, we present a comprehensive breakdown of the time of optimization process within GMS at each sampling step. Subfigure (a) in Fig.~\ref{fig2:men_time} illustrates that when the batch size is set to 100, the time of optimizing approximately 320 steps to fit a Gaussian mixture transition kernel at each step is equivalent to the time needed for one network inference, which means that the additional time of GMS for each sampling steps is about $10\%$ when the number of optimizing steps is 30.

Meanwhile, Subfigure (a) in Fig.~\ref{fig2:men_time} elucidates the relation between sampling time and the number of sampling steps for both GMS and SN-DDPM. It is noteworthy that the optimization steps employed in GMS remain fixed at 25 per sampling step, consistent with our setting in experiments. Evidently, as the number of sampling steps escalates, GMS demonstrates a proportional increase in computational overhead, consistently maintaining this overhead within a 10$\%$ margin of the original computational cost.

\begin{figure}[H]
\centering
\begin{center}
\subfloat[Optimizing time and steps]{\includegraphics[width=0.45\linewidth]{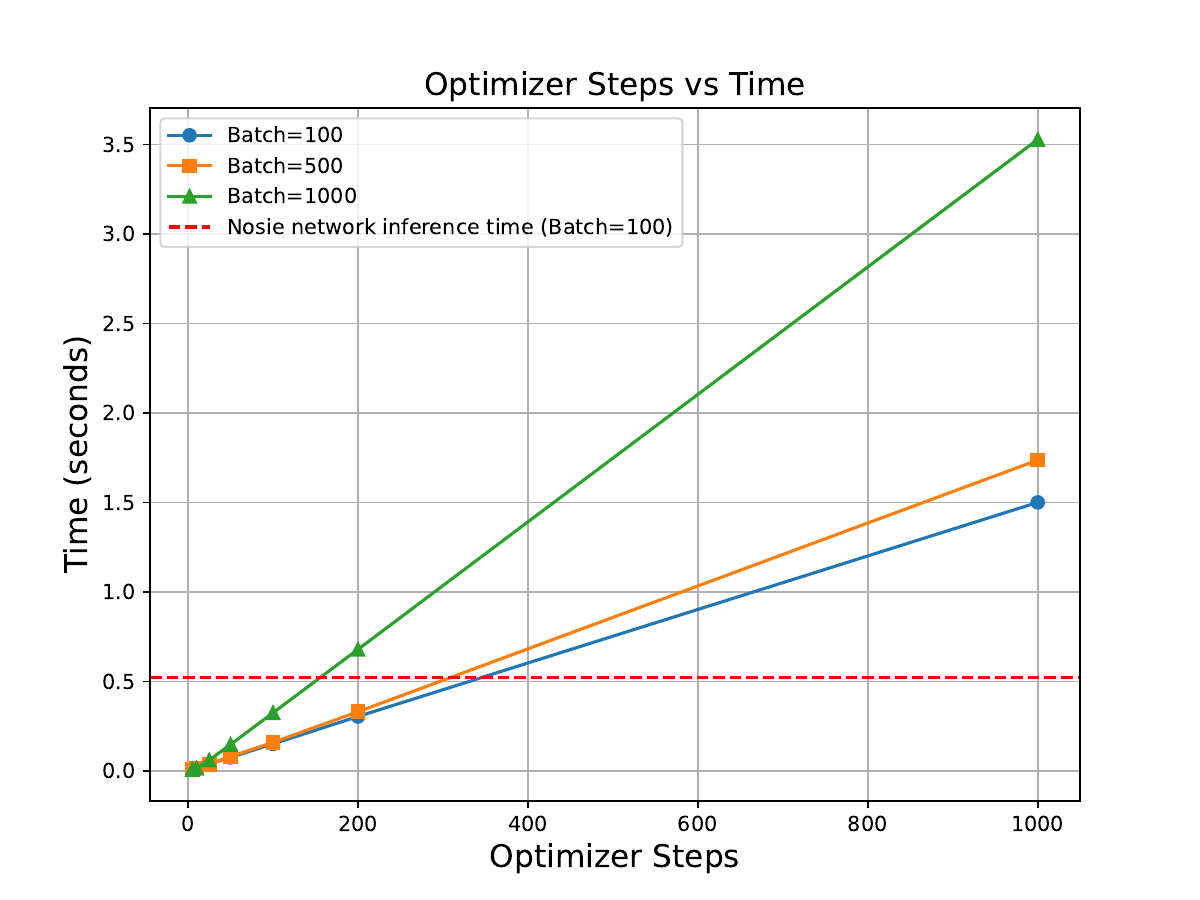}}\ 
\subfloat[Sampling time and steps]{\includegraphics[width=0.45\linewidth]{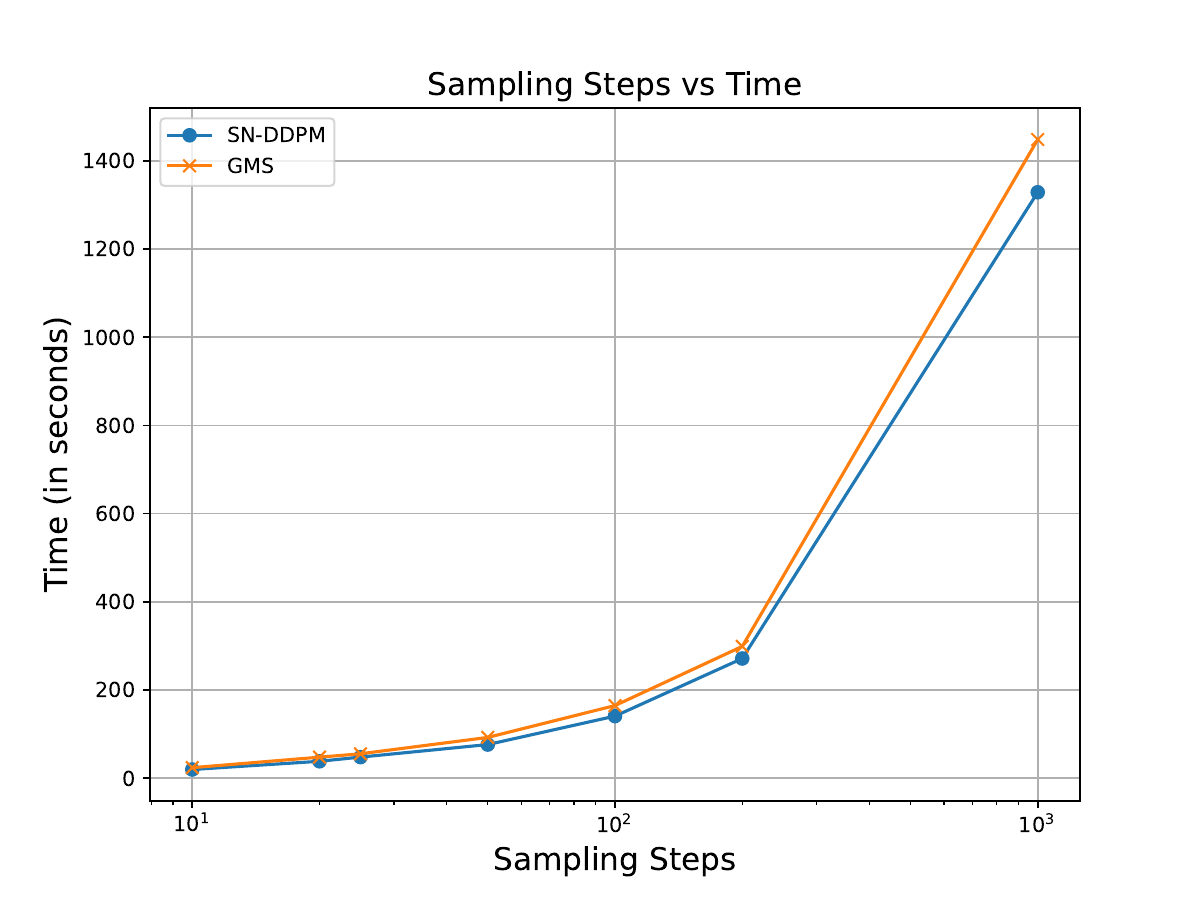}}\ 
\vspace{-.1cm}
\caption{\textbf{Sampling/optimizing steps and time (in seconds).} 
(a). The correlation between the number of optimizing steps and the corresponding optimizing time for GMS. Notably, the optimizing.
(b). The correlation between the number of sampling steps and the corresponding sampling time for a single batch is observed. Notably, the sampling time demonstrates nearly linear growth with the increase in the number of sampling steps for both solvers.
}
\label{fig2:men_time}
\end{center}
\vspace{-0.2cm}
\end{figure}

Since many parts in the GMS introduce additional computational effort, Fig.~\ref{fig:men_dis} provides detailed information on the additional computational time of the GMS and SN-DDPM relative to the DDPM, assuming that the inference time of the noise network is unit one. 

Emphasizing that the extra time is primarily for reference, most pixels can work well with a Gaussian distribution. By applying a threshold and optimizing only when the disparity between the reverse transition kernel and Gaussian exceeds it, we can conserve about 4$\%$ of computational resources without compromising result quality.
\begin{figure}[h]
\centering
\begin{minipage}{\textwidth}
\begin{center}
{\includegraphics[width=0.98\linewidth]{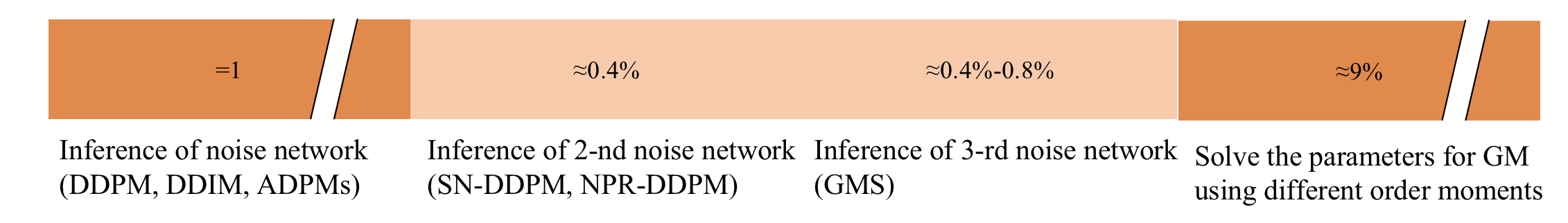}} \\
\vspace{-.1cm}
\caption{Time cost distribution for GMS.}
\label{fig:men_dis}
\end{center}
\end{minipage}
\end{figure}

\subsection{The reduction in the FID on CIFAR10 for GMS compared to SN-DDPM}
\label{sec:fid_reduction}

To provide a more intuitive representation of the improvement of GMS compared to SN-DDPM at different sampling steps, we have constructed Fig.~\ref{fig:fid_reduce} below. As observed from Fig.~\ref{fig:fid_reduce}, GMS exhibits a more pronounced improvement when the number of sampling steps is limited. However, as the number of sampling steps increases, the improvement of GMS diminishes, aligning with the hypothesis of our non-Gaussian reverse transition kernel as stated in the main text. However, we respectively clarify that due to the nonlinear nature of the FID metric, the relative/absolute FID improvements are not directly comparable across different numbers of steps

\begin{figure}[tb]
    \centering
    \includegraphics[width=0.75\textwidth]{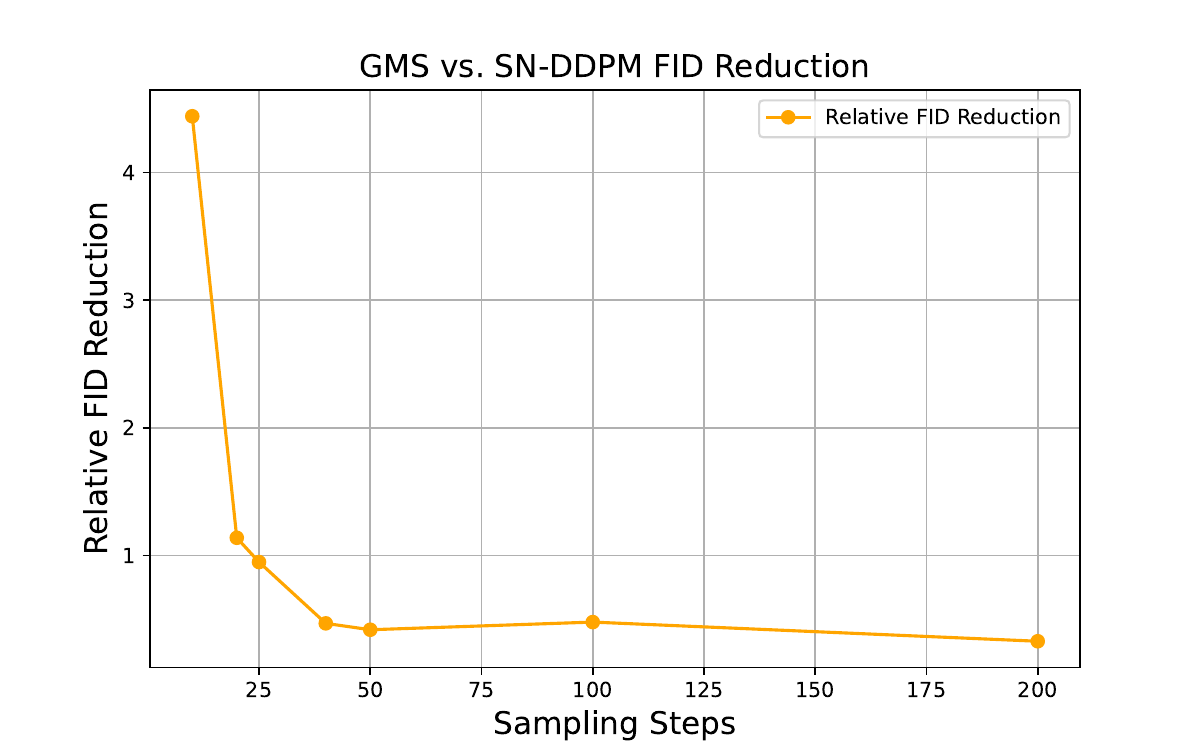}
    \caption{The reduction in the FID on CIFAR10 for GMS compared to SN-DDPM. With a restricted number of sampling steps, the improvement exhibited by GMS surpasses that achieved with an ample number of sampling steps.}
    \label{fig:fid_reduce}
\end{figure}

\subsection{Additional results with latent diffusion}
\label{sec:uvit}

we conduct an experiment on ImageNet 256 $\times$ 256 with the backbone noise network as U-ViT-Huge~\citep{bao2023all}. We train extra the second-order and the third-order noise prediction heads with two transformer blocks with the frozen backbone. The training iterations for each head is 150K and other training parameters are the same as the training of the backbone of U-ViT-Huge (details in~\citep{bao2023all}). We will evaluate the sampling efficiency of GMS under conditional sampling (classifier-free guidance scale is 0.4) and unconditional sampling in Tab.~\ref{tab:uvit}

\begin{table}[H]
\centering
\caption{
\textbf{Comparison with DDPM and SN-DDPM w.r.t. FID score $\downarrow$ on ImageNet 256 $\times$ 256 by using latent diffusion.}  Uncond denotes the unconditional sampling, and Cond denotes sampling combining 90$\%$ conditional sampling with classifier-free guidance equals 0.4 and 10$\%$ unconditional sampling, consistent with~\citet{bao2023all}.
}
\label{tab:uvit}
\vskip 0.15in
\begin{small}
\begin{sc}
\setlength{\tabcolsep}{3pt}{
\begin{tabular}{lrrrrrrrrrrrr}
\toprule
& & \multicolumn{6}{c}{Cond, CFG=0.4} & \multicolumn{5}{c}{Uncond} \\ \\
\cmidrule(lr){2-8} \cmidrule(lr){9-13}
\# timesteps $K$ & 
15 & 20 & 25 & 40 & 50 & 100 & 200 & 25 & 40 & 50 & 100   \\
\midrule
DDPM, $\tilde{\beta}_t$ &
6.48 & 5.30 & 4.86 & 4.42 & 4.27 & 3.93 & 3.32  & 8.62 & 6.47 & 5.97 & 5.04\\ 
SN-DDPM &
4.40 & 3.36 & 3.10   & 2.99 & 2.97 & 2.93 & 2.82 & 8.19 & 5.73 & 5.32 & 4.60\\ 
GMS (\textbf{ours}) &
\textbf{4.01} & \textbf{3.07} & \textbf{2.89}   & \textbf{2.88} & \textbf{2.85} & \textbf{2.81} & \textbf{2.74} & \textbf{7.78} & \textbf{5.42} & \textbf{5.03} & \textbf{4.45}\\ 
\arrayrulecolor{black}\midrule
\end{tabular}}
\end{sc}
\end{small}
\end{table}

\subsection{Additional results with same calculation cost}
\label{sec:calculation_time}

Since GMS will cost more computation in the process of fitting the Gaussian mixture, we use the maximum amount of computation required (i.e., an additional 10\% of computation is needed) for comparison, and for a fair comparison, we let the other solvers take 10\% more sampling steps. Our GMS still outperforms existing SDE-based solvers with the same (maximum) computation cost in Tab.~\ref{tab:same_computation_fid}.

\begin{table}[H]
\centering
\caption{
\textbf{Fair comparison with competitive SDE-based solvers w.r.t. FID score $\downarrow$ on CIFAR10.}  SN-DDPM denotes Extended AnalyticDPM from~\citep{bao2022estimating}. The number of sampling steps for the GMS is indicated within parentheses, while for other solvers, it is represented outside of parentheses.
}
\label{tab:same_computation_fid}
\vskip 0.15in
\begin{small}
\begin{sc}
\setlength{\tabcolsep}{3pt}{
\begin{tabular}{lrrrrrrrrrrrr}
\toprule
& \multicolumn{6}{c}{CIFAR10 (LS)} \\
\cmidrule(lr){2-9}
\# timesteps $K$ & 
11(10) & 22(20) & 28(25) & 44(40) & 55(50) & 110(100) & 220(200) & 1000(1000) \\
\midrule
 SN-DDPM$^*$ &
 17.56 & 7.74 & 6.76 & 4.81 & 4.23 & 3.60 & 3.20 & 3.65 \\ 
GMS (\textbf{ours}) &
\textbf{17.43} & \textbf{7.18} & \textbf{5.96}  & \textbf{4.52} & \textbf{4.16} & \textbf{3.26} & \textbf{3.01} & \textbf{2.76}  \\ 
\arrayrulecolor{black}\midrule
\end{tabular}}
\end{sc}
\end{small}

\vspace{-.1cm}
\end{table}

In order to provide a more intuitive representation for comparing different methods under the same sampling time, we have generated Fig.~\ref{fig2:the_time_fid}. Subfig.~(a) of Fig.~\ref{fig2:the_time_fid} illustrates that at each step, there is an additional computational time cost incurred for GMS to fit the Gaussian mixture transition kernel. This computational overhead exhibits a nearly linear growth pattern with the increase in the number of sampling steps.

Subfig.~(b) of Fig.~\ref{fig2:the_time_fid} offers a valuable perspective on the connection between approximate sampling time and sampling quality for two GMS and SN-DDPM. It becomes apparent that GMS consistently exhibits superior performance when compared to SN-DDPM with identical computational resources. Furthermore, the data reveals that the magnitude of improvement introduced by GMS is more substantial when the number of sampling steps is limited, as opposed to scenarios with a higher number of sampling steps.

\begin{figure}[H]
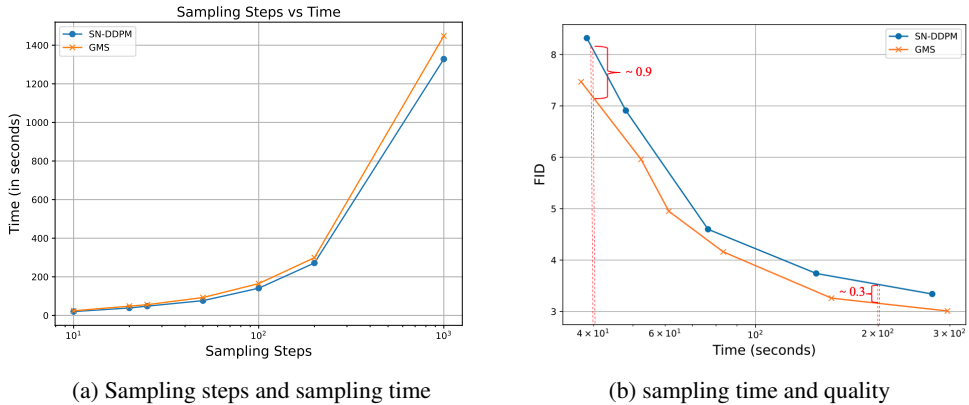

\centering
\begin{minipage}{\textwidth}
\begin{center}
\subfloat[Sampling steps and sampling time]{\includegraphics[width=0.5\linewidth]{img/rebuttal/sample_time_plot.pdf}} \
\subfloat[sampling time and quality]{\includegraphics[width=0.435\linewidth]{img/rebuttal/fid_final.pdf}}\\
\vspace{-.1cm}
\caption{(a). The correlation between the number of sampling steps and the corresponding sampling time for a single batch is observed. Notably, the sampling time demonstrates nearly linear growth with the increase in the number of sampling steps for both solvers.(b).The relation between the sample quality (in FID) and the sampling time (in seconds) of GMS and SN-DDPM on CIFAR10.) }
\label{fig2:the_time_fid}
\end{center}
\end{minipage}
\end{figure}

\subsection{Compare with continuous time SDE-based solvers}
\label{sec:continuous_result}

For completeness, we compare the sampling speed of GMS and non-improved reverse transition kernel in Tab.~\ref{tab:fidgota}, and it can be seen that within 100 steps, our method outperforms other SDE-based solvers.
It is worth noting that the EDM-SDE~\citep{karras2022elucidating} is based on the $x_0$ prediction network, while SEED~\citep{gonzalez2023seeds} and GMS are based on~\citet{ho2020denoising}'s pre-trained model which is a discrete-time noise network. 

\begin{table}[H]
\centering
\caption{
\textbf{Comparison with SDE-based solvers~\citep{karras2022elucidating,gonzalez2023seeds} w.r.t. FID score $\downarrow$ on CIFAR10.}
}
\label{tab:fidgota}
\vskip 0.15in
\begin{small}
\begin{sc}
\setlength{\tabcolsep}{3pt}{
\begin{tabular}{lrrrrrrrrrrrr}
\toprule
& \multicolumn{6}{c}{CIFAR10} \\
\cmidrule(lr){2-6}
\# timesteps $K$ & 
10 & 20 & 40 & 50 & 100  \\
\midrule
SEED-2~\citep{gonzalez2023seeds} &
481.09 & 305.88 & 51.41 & 11.10 & 3.19 \\
SEED-3~\citep{gonzalez2023seeds} &
483.04 & 462.61	 & 247.44 & 62.62 & 3.53 \\
EDM-SDE~\citep{karras2022elucidating} &
35.07 & 14.04 & 9.56 & 5.12 & \textbf{2.99}\\
GMS (\textbf{ours}) &
\textbf{17.43} & \textbf{7.18} & \textbf{4.52}   & \textbf{4.16} & 3.26 \\ 
\arrayrulecolor{black}\midrule
\end{tabular}}
\end{sc}
\end{small}
\end{table}

\subsection{Codes and License}
In Tab.\ref{tab:code}, we list the code we used and the license.
\begin{table}[H]
\centering
\caption{codes and license.}
\label{tab:code}
\vskip 0.15in
\begin{small}
\begin{sc}
\begin{tabular}{lcccc}
\toprule
     & URL & Citation & License \\
\midrule
    & https://github.com/w86763777/pytorch-ddpm & \citet{ho2020denoising} & WTFPL &  \\
    & https://github.com/openai/improved-diffusion & \citet{nichol2021improved} & MIT & \\
\bottomrule
\end{tabular}
\end{sc}
\end{small}
\end{table}

\section{SDEdit}
\label{sec:sde}
Fig.~\ref{figure:weak_iterations} illustrates one of the comprehensive procedures of SDEdit. Given a guided image, SDEdit initially introduces noise at the level of $\sigma_{t_0}$ to the original image $x_0$, where $\sigma$ denotes the noise schedules of forward process in diffusion models. Subsequently, using this noisy image $x_{t_0}$ and then discretizes it via the reverse SDE to generate the final image.
Fig.~\ref{figure:weak_iterations} shows that the choice of $t_0$ will can greatly will greatly affect the realism of sample images. With the increase of $t_0$, the similarity between sample images and the real image is decreasing. 
Hence, apart from conducting quantitative evaluations to assess the fidelity of the generated images, it is also crucial to undertake qualitative evaluations to examine the outcomes associated with different levels of fidelity. Taking all factors into comprehensive consideration, we have selected the range of $t_0$ from 0.3T to 0.5T in our experiments.

\begin{figure}[h]
\begin{center}
\includegraphics[width=0.84\textwidth]{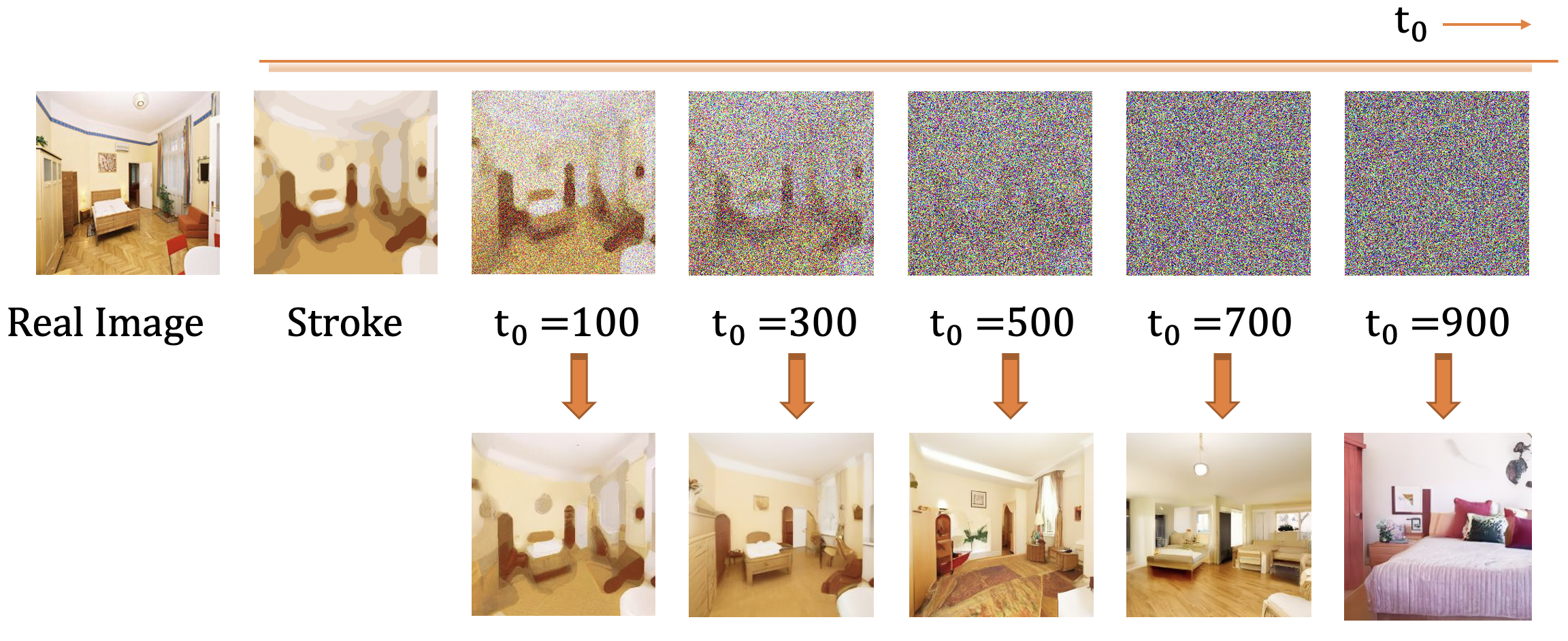}
\end{center}
\caption{$t_0$ denotes the timestep to noise the stroke}
\label{figure:weak_iterations}
\end{figure}

Besides experiments on LSUN 256$\times$256, we also carried out an experiment of SDEdit on ImageNet 64$\times$64. In Tab.~\ref{tab:sdeditimg}, we show the FID score for different methods in different $t_0$ and different sample steps. Our method outperforms other SDE-based solvers as well.

\begin{table}[h]
\centering
\caption{
\textbf{Comparison with competitive methods in SDEdit w.r.t. FID score $\downarrow$ on ImageNet 64$\times$64.} ODE-based solver is worse than all SDE-based solvers. With nearly the same computation cost, our GMS outperforms existing methods in most cases.
}
\label{tab:sdeditimg}
\vskip 0.15in
\vskip 0.05in
\begin{small}
\begin{sc}
\setlength{\tabcolsep}{5.3pt}{
\begin{tabular}{lrrrrrrrrrrrr}
\toprule
& \multicolumn{4}{c}{ImageNet 64X64, $t_0=1200$} \\
\cmidrule(lr){2-5} 
\# $K$ & 
26(28) & 51(55) & 101(111) & 201(221) &  \\
\midrule
DDPM,$\tilde{\beta}_n$  & 
21.37 & 19.15 & 18.85 & 18.15 &  \\
DDIM & 
21.87 & 21.81 & 21.95 & 21.90 & \\
SN-DDPM & 
20.76 & 18.67 & 17.50 & 16.88 &  \\
GMS &
\textbf{20.50} & \textbf{18.37} & \textbf{17.18} & \textbf{16.83} & \\
\arrayrulecolor{black}\midrule
\end{tabular}}
\end{sc}
\end{small}
\vspace{-.1cm}
\end{table}

\section{Samples}
\label{sec:samples}

From Fig.~{\ref{fig:CIFAR10LS40}} to Fig.~\ref{fig:image2}, we show generated samples of GMS under a different number of steps in CIFAR10 and ImageNet 64$\times$64. Here we use $K$ to denote the number of steps for sampling. 
\begin{figure}[t]
\centering
\begin{minipage}{\textwidth}
\begin{center}
\subfloat[GMS ($K=10$)]{\includegraphics[width=0.24\linewidth]{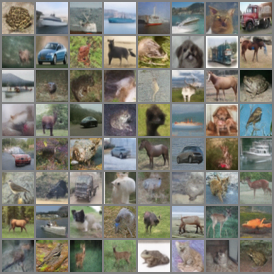}}\ 
\subfloat[GMS ($K=20$)]{\includegraphics[width=0.24\linewidth]{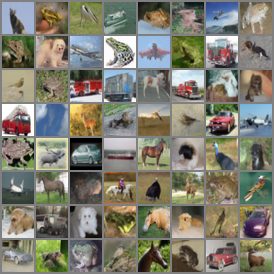}}\ 
\subfloat[GMS ($K=25$)]{\includegraphics[width=0.24\linewidth]{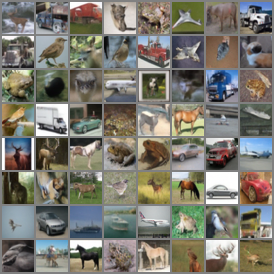}}\ 
\subfloat[GMS ($K=40$)]{\includegraphics[width=0.24\linewidth]{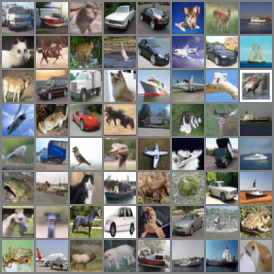}} \\
\vspace{-.2cm}
\subfloat[GMS ($K=50$)]{\includegraphics[width=0.24\linewidth]{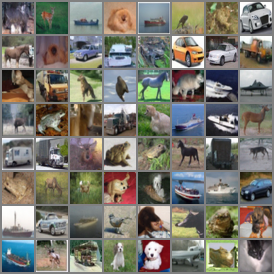}}\ 
\subfloat[GMS ($K=100$)]{\includegraphics[width=0.24\linewidth]{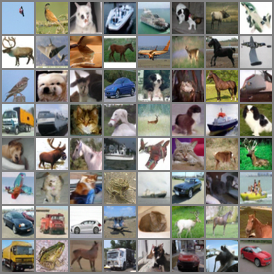}}\ 
\subfloat[GMS ($K=200$)]{\includegraphics[width=0.24\linewidth]{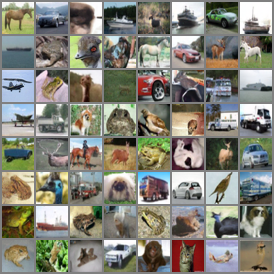}}\ 
\subfloat[GMS ($K=1000$)]{\includegraphics[width=0.24\linewidth]{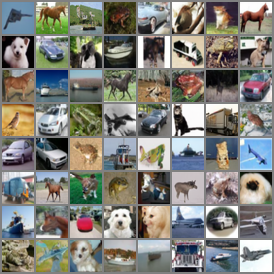}} \\
\vspace{-.1cm}
\caption{Generated samples on CIFAR10 (LS)}
\label{fig:CIFAR10LS40}
\end{center}
\end{minipage}
\end{figure}

\begin{figure}[t]
\centering
\begin{minipage}{\textwidth}
\begin{center}
\subfloat[GMS ($K=25$)]{\includegraphics[width=0.3\linewidth]{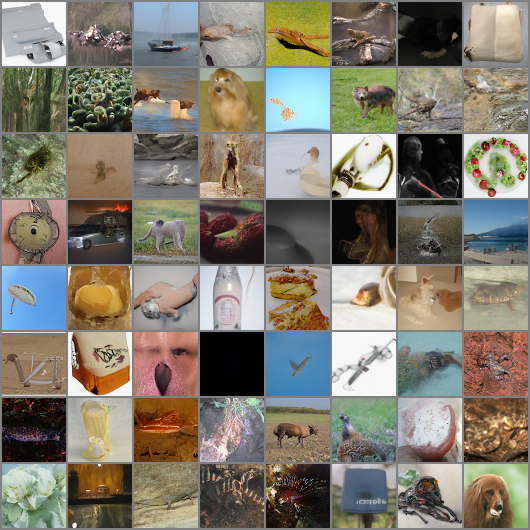}}\ 
\subfloat[GMS ($K=50$)]{\includegraphics[width=0.3\linewidth]{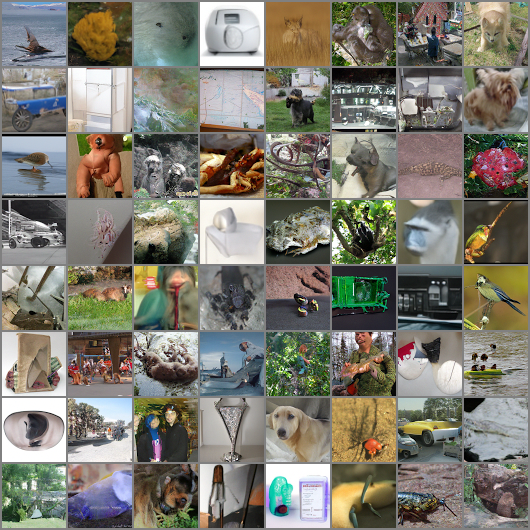}}\ 
\subfloat[GMS ($K=100$)]{\includegraphics[width=0.3\linewidth]{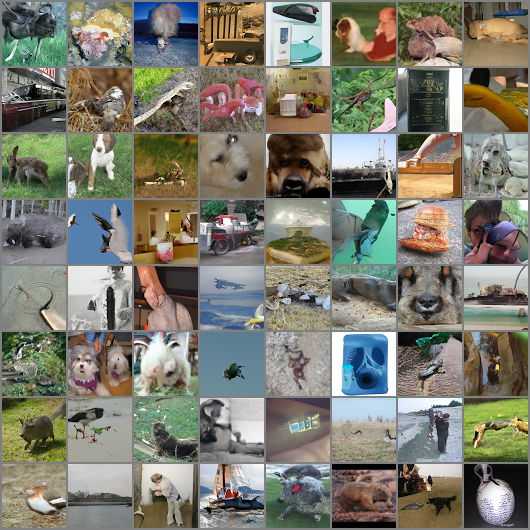}}\ 
\vspace{-.1cm}
\caption{Generated samples on ImageNet 64$\times$64.}
\label{fig:image1}
\end{center}
\end{minipage}
\end{figure}

\begin{figure}[t]
\centering
\begin{minipage}{\textwidth}
\begin{center}
\subfloat[GMS ($K=200$)]{\includegraphics[width=0.3\linewidth]{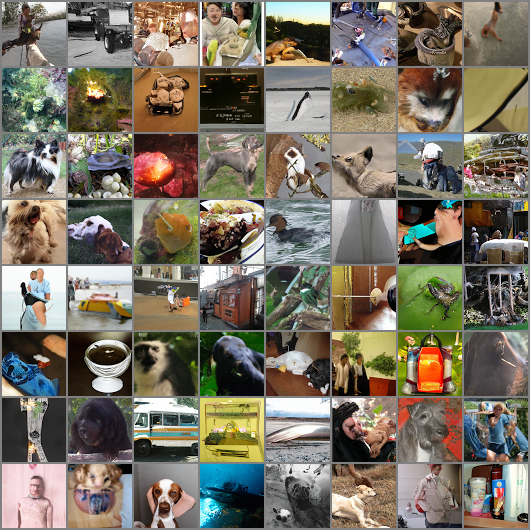}}\ 
\subfloat[GMS ($K=400$)]{\includegraphics[width=0.3\linewidth]{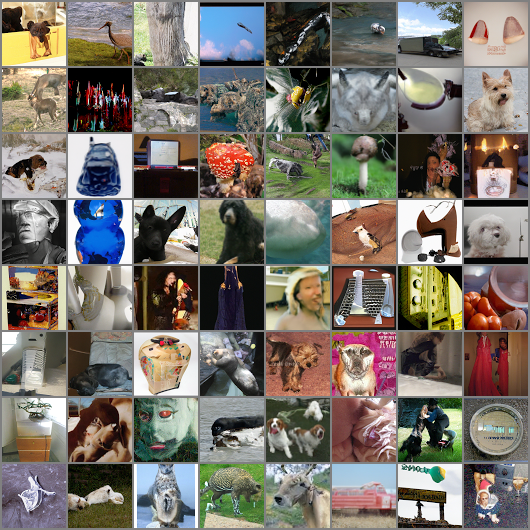}}\ 
\subfloat[GMS ($K=4000$)]{\includegraphics[width=0.3\linewidth]{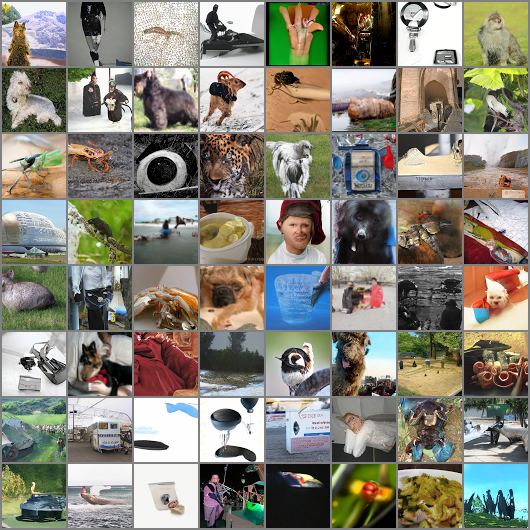}}\ 
\vspace{-.1cm}
\caption{Generated samples on ImageNet 64$\times$64.}
\label{fig:image2}
\end{center}
\end{minipage}
\end{figure}

\end{document}